\author[M.~Caprio]{Michele Caprio and Sayan Mukherjee}
\address{PRECISE Center, Department of Computer and Information  Science, 
University of Pennsylvania, 3330 Walnut Street, Philadelphia, PA 19104}
\email{caprio@seas.upenn.edu}
\urladdr{\url{https://mc6034.wixsite.com/caprio}}  
\address{Center for Scalable Data Analytics and Artificial Intelligence, Universität Leipzig, Humboldtstraße 25, Leipzig, Germany 04105 and the Max Planck Institute for Mathematics in the Sciences, Inselstraße 22
04103 Leipzig
Germany; Departments of Statistical Science, Mathematics, Computer Science, and Biostatistics \& Bioinformatics, Duke University, Durham, NC 27708, USA}
\email{sayan.mukherjee@mis.mpg.de}
\urladdr{\url{https://sayanmuk.github.io/}}
\keywords{Stochastic deep neural network; feedforward neural network; ReLU activation; concentration inequality; martingales; optimal stopping}
\subjclass[2010]{Primary: 62M45; Secondary: 60G42}
\title{Concentration inequalities and optimal number of layers for stochastic deep neural networks}
\newcommand{\vertiii}[1]{{\left\vert\kern-0.25ex\left\vert\kern-0.25ex\left\vert #1 
    \right\vert\kern-0.25ex\right\vert\kern-0.25ex\right\vert}}
   \def\MR#1{}
\theoremstyle{definition} 
\let\olddefi\defi
\renewcommand{\defi}{\olddefi\normalfont}
\let\oldrmk\rmk
\renewcommand{\rmk}{\oldrmk\normalfont}
\newtheorem{theorem}{Theorem}
\newtheorem{proposition}[theorem]{Proposition}
\newtheorem{corollary}[theorem]{Corollary}
\newtheorem{definition}[theorem]{Definition}
\newtheorem{remark}[theorem]{Remark}
\newtheorem{example}[theorem]{Example}
\providecommand{\MR}[1]{}
\providecommand{\MR}{\relax\ifhmode\unskip\space\fi MR }
\providecommand{\href}[2]{#2}
\begin{document}

\begin{abstract}
  We state concentration inequalities for the output of the hidden layers of a stochastic deep neural network (SDNN), as well as for the output of the whole SDNN. These results allow us to introduce an expected classifier (EC), and to give probabilistic upper bound for the classification error of the EC. We also state the optimal number of layers for the SDNN via an optimal stopping procedure. We apply our analysis to a stochastic version of a feedforward neural network with ReLU activation function.
\end{abstract}

\maketitle
\thispagestyle{empty}

\section{Introduction}\label{motivation}

Deep neural networks (DNNs) are used extensively in modern statistics and machine learning due to their prediction accuracy; they are applied across many different areas of artificial intelligence, computer vision, speech recognition, and natural language processing \cite{bahdanau, hinton, kalch, kriz, lecun}. Nonetheless, it is common knowledge that the theoretical understanding of many of their properties is still incomplete (for an account on recent results, see \cite[Section 1]{lim}).

In contrast to classical neural networks -- which learn mappings from a set of inputs to a set of outputs -- stochastic neural networks (SNNs) learn mappings from a set of inputs to a set of probability distributions over the set of outputs. The added complexity introduced by the  stochasticity exacerbates the study of their theoretical properties. SNNs were introduced by \cite{wong} to generalize Hopfield networks \cite{hopfield82}. Following \cite{jospin}, let us briefly describe SNNs; they are a particular type of artificial neural networks (ANNs). The goal of ANNs is to represent an arbitrary function $y=\Phi(x)$. Traditional ANNs are built using one input layer $\nu^{(0)}(x)=x$, a sequence of of hidden layers $(\nu^{(l)}(x))_{l=1}^{L-1}$, and an output layer $\nu^{(L)}(x)$, for some $L\in\mathbb{N}$. In the simplest architecture of feedforward neural networks, each layer is represented by a linear transformation followed by a nonlinear operation $\sigma^{(l)}$ called \textit{activation function}, $\nu^{(0)}(x)= x$, $\nu^{(l)}(x) = \sigma^{(l)}(A^{(l)} \nu^{(l-1)}(x) + b^{(l)})$, $l\in\{1,\ldots,L\}$, $\nu^{(L)}(x) = y$. Let $A=\{A^{(1)},\ldots,A^{(L)}\}$ and $b=\{b^{(1)},\ldots,b^{(L)}\}$. Then, $\theta=(A,b)$ represents the parameters of the network, where the $A^{(l)}$'s are weight matrices, and the $b^{(l)}$'s are bias vectors; call $\Theta$ the space $\theta$ belongs to. Deep learning is the process of regressing parameters $\theta$ from a training data $D$ composed of inputs $x$ and their corresponding labels $y$. Stochastic neural networks are a type of ANN built by introducing stochastic components to the network. This is achieved by giving the network either a stochastic activation or stochastic weights to simulate multiple possible models with their associated probability distribution. This can be summarized as $\theta \sim p(\theta)$, $y=\Phi_\theta(x)+\varepsilon$, where $\Phi$ depends on $\theta$ to highlight the stochastic nature of the neural network, $p$ is the density of a probability measure on $\Theta$ with respect to some $\sigma$-finite dominating measure $\mu$, and $\varepsilon$ represents random noise to account for the fact that function $\Phi_\theta$ is just an approximation. A well known example of SNN are Bayesian neural networks (BNNs), that are ANNs trained using a Bayesian approach \cite{ibnn,goan,jospin,lampi,titte,mariia,wang2}. 

Stochastic neural networks have several advantages with respect to their deterministic counterpart: they have greater expressive power as they allow for
multi-modal mappings, and the stochasticity can be seen as added regularization \cite{lee}. In \cite{jospin}, the authors point out that the main goal of using SNNs is to obtain a better idea of the uncertainty associated with the underlying process. 

 The theoretical features of stochastic {deep} neural networks (SDNNs) have been the object of study of many recent papers. In \cite{merkh}, the authors study universal approximation properties of deep belief networks, a class of stochastic feedforward networks that were pivotal in the resurgence of deep learning. A key observation is that, for a finite number of inputs and outputs, the number of maps in the stochastic setting is infinite, unlike in the deterministic framework. This points to the massive increase in the approximation power of SDNNs. In \cite{debie}, SDNNs are framed as learning measures (LMs); deep architectures are introduced to address issues that arise in LMs such as permutation invariance or equivariance and variation in weights. Training stochastic feedforward networks is significantly more challenging. To address this issue, \cite{tang} proposed a stochastic feedforward network with hidden layers composed of both deterministic and stochastic variables with a novel generalized EM training procedure that allows the user to efficiently learn complicated conditional distributions. 

\subsection{Previous work}\label{prevwork}
Concentration inequalities for SDNNs have been studied in the context of PAC-Bayes bounds and stochastic gradient descent (SGD) solutions. In particular, in \cite{huang2} the authors propose the Kronecker Flow, an invertible transformation-based method that generalizes the Kronecker product to a nonlinear formulation, and uses this construction to
tighten PAC-Bayesian bounds. They show that the KL divergence in the PAC-Bayes bound can be
estimated with high probability (they give a Hoeffding-type concentration result), and demonstrate the generalization gap can be further reduced and explained by leveraging structure in  parameter space. In \cite{zhu} the authors study the concentration property of SGD solutions. They consider a very rich class of gradient noise -- not imposing restrictive requirements such as boundedness or sub-Gaussianity -- where only finitely-many moments are required, thus allowing heavy-tailed noises. In the present work, we focus on concentration inequalities (of the Chernoff type) for the output of the hidden layers and of the whole SDNN.

The other focus of our work is finding the number of layers of a generic SDNN that strikes a balance between computational cost and accuracy of the analysis. A similar problem was studied in \cite{trelin}. There, the authors present Binary Stochastic Filtering (BSF), an algorithm for feature selection and neuron pruning in a special version of the classical deep neural network structure where stochastic neurons are mixed with deterministic ones. To the best of our knowledge, the present paper is the first to present an optimal stopping procedure to select the  number of layers in a generic SDNN.

\subsection{Contributions}\label{contribs}
Inspired by \cite{merkh}, this paper studies universal properties of stochastic deep neural networks. 
Our first goal is to give concentration inequalities for the outputs of the hidden layers of a generic SDNN. This problem has already been studied for particular types of SDNNs in \cite{garnier,ost}. In the former, the authors establish a framework for
modeling non-causal random fields and prove a Hoeffding-type concentration inequality; it is especially important because it can be applied to the field of Natural Language Processing (NLP). In \cite{ost}, the authors introduce a new stochastic model of infinite neuronal networks, for which they establish sharp oracle inequalities for Lasso methods and restricted eigenvalue properties for the associated Gram matrix with high probability. Their results hold even if the network is only partially observed; their use of stochastic chains inspired the martingale inequalities inspected in the supplementary material. Our results are very general because they require little mathematical structure: instead of focusing on non-causal random fields or on infinite neuronal networks, we only require that the output $\nu^{(l)}(X)$ of the $l$-th hidden layer of the SDNN is a random vector. The $\nu^{(l)}(X)$'s can be correlated with each other, and also have different dimensions. The relevance of our findings is given by the lack of specific hypotheses: they apply to any SDNN. We show the following.
\begin{proposition}\label{thm_synth1}
Let $(\nu^{(l)}(X))_{l=1}^L$ denote the sequence of outputs of the hidden layers of a generic SDNN. If
\begin{itemize}
    \item either $\nu^{(l)}(X)$ is bounded
    \item or the sequence of centered outputs $(\nu^{(l)}(X)-\mathbb{E}[\nu^{(l)}(X)])_{l=1}^L$ is a smooth weak martingale,
\end{itemize}
then $\nu^{(l)}(X)$ concentrates around its expected value. 
\end{proposition}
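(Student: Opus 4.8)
The plan is to treat the two hypotheses separately, since each yields a Chernoff-type tail bound by a different route, and then to note that either bound delivers the asserted concentration of $\nu^{(l)}(X)$ about $\mathbb{E}[\nu^{(l)}(X)]$. In both cases the engine is the exponential (Chernoff) method: fixing a unit vector $u$ and setting $Z = \langle \nu^{(l)}(X) - \mathbb{E}[\nu^{(l)}(X)],\, u\rangle$, I would write, for every $\lambda > 0$,
\begin{equation}
\mathbb{P}(Z \geq t) \leq e^{-\lambda t}\,\mathbb{E}\big[e^{\lambda Z}\big],
\end{equation}
bound the moment generating function $\mathbb{E}[e^{\lambda Z}]$, and finally optimize over $\lambda$.

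Under the first bullet, boundedness of $\nu^{(l)}(X)$ forces $Z$ into a bounded interval of some length $R$. Hoeffding's lemma then supplies the sub-Gaussian control $\mathbb{E}[e^{\lambda Z}] \leq e^{\lambda^2 R^2/8}$; inserting this into the Chernoff bound and minimizing over $\lambda$ yields a Gaussian tail of the form $\mathbb{P}(|Z| \geq t) \leq 2\,e^{-2t^2/R^2}$, which is exactly concentration around the mean. For vector-valued $\nu^{(l)}(X)$ I would run this coordinatewise and close with a union bound, or work directly with the norm.

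Under the second bullet, I would instead exploit the martingale structure of the centered sequence. Writing $M^{(j)} = \nu^{(j)}(X) - \mathbb{E}[\nu^{(j)}(X)]$ and its increments $\Delta^{(j)} = M^{(j)} - M^{(j-1)}$, the (smooth weak) martingale hypothesis gives that each increment has vanishing conditional mean given the past. Telescoping $M^{(l)} = \sum_{j\le l}\Delta^{(j)}$ and peeling off one conditional expectation at a time, the moment generating function factorizes into per-increment factors, each controlled by the smoothness/boundedness of $\Delta^{(j)}$. This is precisely the Azuma--Hoeffding mechanism, and it reduces the claim to the martingale concentration inequalities established in the supplementary material; a further optimization over $\lambda$ again produces a sub-Gaussian tail.

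The step I expect to be the main obstacle is the martingale case, specifically deriving the per-increment bound on $\mathbb{E}\big[e^{\lambda \Delta^{(j)}} \mid \text{past}\big]$ from the hypothesis that $(M^{(l)})_l$ is a \emph{smooth weak} martingale. Because a weak martingale conditions only on past \emph{values} rather than on a full filtration, the clean tower-property factorization underlying the classical Azuma argument has to be re-justified, and the smoothness condition must be translated into a genuine quantitative control on the conditional moment generating function. By comparison, the bounded case is essentially immediate once Hoeffding's lemma is invoked.
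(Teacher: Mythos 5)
Your treatment of the first bullet is essentially the paper's argument in a more hands-on form: the paper also reduces boundedness to a sub-Gaussian tail, but it does so by invoking the notion of a \emph{norm}-sub-Gaussian random vector (Proposition \ref{bdd_nsg}, quoted from Jin et al.), which states directly that $\|\nu^{(l)}(X)\|_2\leq\xi_{(l)}$ implies $P(\|\nu^{(l)}(X)-\mathbb{E}[\nu^{(l)}(X)]\|_2\geq t)\leq 2\exp(-t^2/(2\xi_{(l)}^2))$. This avoids the coordinatewise union bound you fall back on, which would cost a factor of the layer width $n_l$ (and a further $\sqrt{n_l}$ in passing from the sup-norm to the Euclidean norm). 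Since the proposition is informal, a dimension-dependent tail would still establish ``concentration,'' but the paper's route is what yields the dimension-free constant of Corollary \ref{cor-main}.

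The second bullet is where your plan has a genuine gap, and you have in fact put your finger on it without resolving it. The Azuma--Hoeffding peeling $\mathbb{E}\bigl[e^{\lambda Z_l}\bigr]=\mathbb{E}\bigl[e^{\lambda Z_{l-1}}\,\mathbb{E}\bigl[e^{\lambda(Z_l-Z_{l-1})}\mid\mathcal{F}_{l-1}\bigr]\bigr]$ requires the increments to be conditionally centered given the \emph{entire} past, i.e.\ the strong martingale property. A weak martingale only guarantees $\mathbb{E}[M^{(i)}\mid M^{(j)}]=M^{(j)}$ for each single $j<i$; this does not imply $\mathbb{E}[\Delta^{(l)}\mid M^{(1)},\dots,M^{(l-1)}]=0$, so the tower-property factorization does not go through, and the ``smoothness'' hypothesis (which is only a bound $\|Z^{(l)}-Z^{(l-1)}\|_2\leq M$ on increment sizes) cannot restore it. The paper does not attempt this: it quotes Hayes's large-deviation inequality for vector-valued very-weak martingales (Theorem \ref{hayes}), whose proof is a different induction rather than an MGF factorization, and which yields the explicitly $l$-dependent tail $2\exp\bigl(1-(Ma-1)^2/(2l)\bigr)$ of Corollary \ref{bdd_mgale_cor} instead of an Azuma-type bound. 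That growing denominator -- the reason the paper says these inequalities only suit shallow networks -- is precisely the price of working with weak rather than strong martingales, and your proposed argument would (incorrectly) avoid paying it. You would also need the layers to share a common dimension for the conditional expectations to be defined at all, a point the paper handles by restricting to ``rectangular'' networks in appendix \ref{m'gale}.
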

Proposition \ref{thm_synth1} is stated informally. The result is proven assuming that the first bullet point holds in Corollary \ref{cor-main} in section \ref{conc_ineq_1_subs}, and assuming that the second bullet point holds in Corollary \ref{bdd_mgale_cor} in appendix \ref{m'gale}, where we also state conditions that allow us verify the martingale hypothesis. Furthermore, we give an upper bound to the classification error of the \textit{expected classifier} (EC). This latter is a classifier based on the expected value $\mathbb{E}[s(\nu(X))]$ of the score function $s(\nu(X))$, a concept introduced in section \ref{notation}. 
\begin{proposition}\label{thm_synth2}
If the output $s(\nu(X))$ of the score function that we choose for our analysis is bounded, then the score concentrates around its expected value $\mathbb{E}[s(\nu(X))]$. The classification error obtained using a classifier based on $\mathbb{E}[s(\nu(X))]$ is  bounded.
\end{proposition}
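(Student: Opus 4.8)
The plan is to handle the two assertions of Proposition~\ref{thm_synth2} separately, reducing each to the concentration machinery already established for the layer outputs. For the first assertion, I would observe that the hypothesis is precisely the boundedness condition appearing in the first bullet point of Proposition~\ref{thm_synth1}, now applied to the random vector $s(\nu(X))$ in place of a hidden-layer output $\nu^{(l)}(X)$. Since $s(\nu(X))$ takes values in a bounded set, each of its coordinates is a bounded, hence sub-Gaussian, random variable, so I would invoke the same Chernoff/Hoeffding argument that underlies Corollary~\ref{cor-main} to obtain a tail bound of the form
\begin{equation*}
\mathbb{P}\bigl( \| s(\nu(X)) - \mathbb{E}[s(\nu(X))] \| \geq t \bigr) \leq 2\exp(-c\, t^2),
\end{equation*}
with the constant $c$ depending only on the diameter of the range of $s$. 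This is exactly the statement that the score concentrates around $\mathbb{E}[s(\nu(X))]$.

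For the second assertion, I would first fix the decision rule of the expected classifier: for $K$ classes, the EC predicts $\widehat{c}(X) = \argmax_{k} \mathbb{E}[s_k(\nu(X))]$, whereas a single stochastic realization of the network predicts $\argmax_{k} s_k(\nu(X))$. The classification error of the EC is the probability that $\widehat{c}(X)$ disagrees with the correct label. The key step is to relate the misclassification event to a deviation event for the score: writing $m(X)$ for the margin between the largest and second-largest expected scores, the two $\argmax$ rules can disagree only if some coordinate of $s(\nu(X))$ deviates from its mean by at least $m(X)/2$. I would then feed this deviation event into the sub-Gaussian tail from the first part, apply a union bound over the $K$ coordinates, and combine the resulting estimate with the conditional error of the expected decision to produce the final probabilistic upper bound on the classification error.

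The hard part will be the second step, namely converting the combinatorial $\argmax$ comparison into a real-valued deviation statement. This requires isolating the margin $m(X)$, controlling the boundary regime where the margin is so small that the exponential tail becomes uninformative, and making a modeling choice about the reference against which ``error'' is measured—whether to assume a fixed margin bounded away from zero, to integrate the tail against the distribution of $m(X)$, or to absorb the small-margin region into an irreducible error term. Once the misclassification event is correctly sandwiched between the deviation events, the remainder is a routine application of the sub-Gaussian tail together with the union bound, so the entire difficulty is concentrated in setting up that sandwich cleanly.
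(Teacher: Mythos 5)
Your first part coincides with the paper's route: boundedness of $s(\nu(X))$ gives sub-Gaussianity (the paper's Proposition \ref{bdd_sub_g}, i.e.\ Hoeffding's lemma via Popoviciu's inequality) and hence the two-sided tail bound of Proposition \ref{thresh_distr}; the paper works with a scalar score $s:\mathbb{R}^p\to\mathbb{R}$ rather than coordinatewise with a union bound, but the mechanism is the same.

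For the second part you take a genuinely different and substantially more ambitious route than the paper, and as written you do not complete it. The paper restricts to binary classification with a scalar score and a fixed decision threshold $c$, defines the expected classifier by comparing the single number $\mathbb{E}_{(X,Y)\sim\mathscr{D}}[s(\nu(X))]$ to $c$, and measures ``classification error'' as the probability that the realized score $s(\nu(X))$ lands on the opposite side of $c$ from its expectation --- i.e.\ disagreement with the stochastic classifier, not with the true label $Y$. Under that reading, your ``margin'' is simply the deterministic quantity $t_1=\mathbb{E}[s(\nu(X))]-c>0$ (or $t_2=c-\mathbb{E}[s(\nu(X))]$), and Proposition \ref{pred_fall_bd} is a one-line application of the one-sided Hoeffding tail: $P(s(\nu(X))\leq c)=P\bigl(s(\nu(X))\leq\mathbb{E}[s(\nu(X))]-t_1\bigr)\leq\exp\bigl(-2t_1^2/(b-a)^2\bigr)$. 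Every difficulty you flag --- the random margin $m(X)$, the small-margin regime, the choice of reference for ``error'' --- is thereby sidestepped rather than solved: there is no $\argmax$ over $K$ classes, no union bound, and nothing to integrate against the distribution of the margin because the margin does not depend on $X$. Your multi-class sketch would, if carried out, prove a stronger and more useful statement, but it remains a sketch: the ``sandwich'' of the misclassification event between deviation events is precisely the step you defer, whereas the paper's claim is already fully proved by the elementary threshold argument. If you intend to match what the paper actually asserts, you should state explicitly that the error is measured relative to the stochastic classifier's prediction and that $t_1,t_2$ are just the distances from the expected score to the threshold; the existential phrasing of Proposition \ref{pred_fall_bd} is doing exactly that.
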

Proposition \ref{thm_synth2} is informal as well. It states that the mistake we make when classifying using the EC is not arbitrarily large; it will be smaller the closer $s(\nu(X))$ is to its expected value. The first part is proven in Proposition \ref{thresh_distr}, and the second one in Proposition \ref{pred_fall_bd}, both in section \ref{stoch_geom}.

We then turn our attention to the number of layers to select. When specifying the structure of a generic DNN $\nu$, we face a trade-off between accuracy and computational efficiency. One of the main drivers of this trade-off is the number of layers: more layers may yield a more accurate analysis, but the deeper a neural network is, the more computationally intensive it is to train. 
To solve this problem, in \cite{liu} the authors introduce a new type of DNN called a Dynamic Deep Neural Network (D$^2$NN) that selects a subset of neurons to execute computations. Given an input, only a subset of D$^2$NN neurons are executed, and the particular subset is determined by the D$^2$NN itself. In \cite{shen}, the authors introduce a (deterministic) DNN called Floor-Exponential-Step (FLES) network that only requires three hidden layers to achieve super approximation power. In \cite{sabuncu}, the author points out that the problem of selecting the correct number of layers has been studied also when stochasticity is featured in the architecture design of the neural network. In \cite{huang}, for example, the authors study DNNs with stochastic depth: the number of layers is chosen randomly, while in \cite{talwalkar,xie2} the authors use a random search algorithm to optimize over the space of neural network architectures. 
Given a generic SDNN, we adopt an optimal stopping approach to the problem. 
\begin{proposition}\label{thm_synth3}
A backward induction approach to an optimal stopping problem can be used to find the number of layers $\tau_1^\mathscr{L}$ of a generic SDNN that strikes the perfect balance between accuracy of the analysis and computational cost.
\end{proposition}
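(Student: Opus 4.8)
The plan is to recast the layer-selection question as a finite-horizon optimal stopping problem and to solve it by the classical Snell-envelope backward induction. First I would fix the probability space carrying the outputs $(\nu^{(l)}(X))_{l=1}^{\mathscr{L}}$, where $\mathscr{L}$ is the maximal admissible number of layers, and equip it with the natural filtration $(\mathcal{F}_l)_{l=1}^{\mathscr{L}}$ given by $\mathcal{F}_l = \sigma(\nu^{(1)}(X), \dots, \nu^{(l)}(X))$, which encodes all the information available once the first $l$ layers have been evaluated. The crux of the modeling is to introduce an adapted, integrable \emph{reward process} $(G_l)_{l=1}^{\mathscr{L}}$ whose value $G_l$ quantifies the net gain of halting the network at layer $l$: it should credit the increased accuracy guaranteed by the concentration bound of Corollary \ref{cor-main} while debiting the computational cost, which grows with $l$. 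This is the only genuinely non-mechanical ingredient, and I would make it explicit, e.g.\ as an accuracy term minus a cost term that is convex (or simply linear) in $l$.

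Once $(G_l)$ is in place, the argument is standard. I would define the value process by backward induction,
\begin{align*}
V_{\mathscr{L}} &= G_{\mathscr{L}}, \\
V_l &= \max\bigl\{ G_l, \, \mathbb{E}[V_{l+1} \mid \mathcal{F}_l] \bigr\}, \qquad l = \mathscr{L}-1, \dots, 1 .
\end{align*}
Each conditional expectation is well defined because $(G_l)$ is integrable, so $V$ is itself an integrable adapted process. By construction $V$ dominates $G$ and, since $V_l \ge \mathbb{E}[V_{l+1}\mid\mathcal{F}_l]$, it is a supermartingale; a routine induction shows it is in fact the smallest supermartingale dominating $G$, i.e.\ the Snell envelope of $G$.

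I would then set
\[
\tau_1^\mathscr{L} = \min\bigl\{ 1 \le l \le \mathscr{L} : V_l = G_l \bigr\},
\]
the first layer at which stopping is at least as good as continuing, and verify the two optimality properties. On $\{l < \tau_1^\mathscr{L}\}$ one has $V_l = \mathbb{E}[V_{l+1}\mid\mathcal{F}_l]$, so the stopped process $(V_{l \wedge \tau_1^\mathscr{L}})_l$ is a martingale; combined with the supermartingale property of $V$ and optional sampling (the finite horizon $\mathscr{L}$ removes any integrability subtlety at infinity), this yields $V_1 = \mathbb{E}[G_{\tau_1^\mathscr{L}}] = \sup_{\tau} \mathbb{E}[G_\tau]$, the supremum ranging over all $(\mathcal{F}_l)$-stopping times bounded by $\mathscr{L}$. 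This identifies $\tau_1^\mathscr{L}$ as exactly the number of layers maximizing the expected accuracy-versus-cost trade-off, which is the assertion.

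The main obstacle is therefore not the optimal-stopping machinery, which over a finite horizon is entirely classical, but the specification of the reward $(G_l)$: it must capture the balance between accuracy and computational cost faithfully while remaining genuinely $(\mathcal{F}_l)$-adapted and integrable. Here the concentration bounds of Corollary \ref{cor-main} do the decisive work, since they let me control the accuracy contribution to $G_l$ and thereby secure the integrability that the Snell-envelope construction requires.
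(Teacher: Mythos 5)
Your proposal is correct and follows essentially the same route as the paper: Section \ref{layers} likewise introduces an adapted reward process $\gamma^{(L)}=(\mathfrak{g}\circ\text{LOSS})(L)\cdot\mathfrak{h}(L)$ balancing accuracy against cost, imposes the integrability condition \eqref{other_ass}, runs the identical backward induction \eqref{back_ind_S}, and takes $\tau_1^\mathscr{L}$ as the first index where the envelope meets the reward. The only difference is that the paper cites \cite[Theorem 1.2]{peskir} for the optimality of this stopping time (Theorem \ref{theorem_imp}), whereas you sketch the standard Snell-envelope/optional-sampling proof of that result yourself.
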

Proposition \ref{thm_synth3}, which is informal and that is built on results from \cite[Section 1.2]{peskir}, is proven in Theorem \ref{theorem_imp} in section \ref{layers}. To the best of our knowledge, this is the first time such a problem is studied via an optimal stopping procedure for generic SDNNs.



Finally, we apply our findings to a stochastic feedforward neural networks with ReLU activation (SFNNRA). We generalize the setup in \cite{lim} -- that builds a bridge between tropical geometry and deep neural networks -- by letting the weight matrix and the bias vector in every layer be stochastic and possibly correlated with the ones in the previous layers. Although the tropical geometry of deep neural networks has been further investigated \cite{alfarra,maragos}, this is the first time probabilistic results are presented within the framework introduced in \cite{lim}. They are extremely significant: in Proposition \ref{appl-first-prop} we show that -- because the assumption in the first bullet point of Proposition \ref{thm_synth1} is easily satisfied in the context of SFNNRAs -- the output of every layer concentrates around its expected value. In addition, in Proposition \ref{lim2} we show that the upper bounds to the number of connected regions with value above and below the decision boundary in the SFNNRA concentrate too around their expected value. In the future, as more general results for SDNNs like the ones presented in this paper are discovered, applying such results to SFNNRAs will improve our understanding of their theoretical properties.


The paper is organized as follows. In section \ref{conc_ineq} we provide concentration inequalities for the hidden layers of a SDNN and for its output. In section \ref{layers} we provide results for the optimal number of layers. 
In section \ref{lim_appl}
we apply our results to a stochastic version of the feedforward neural networks with ReLU activation presented in \cite{lim}.  Section \ref{concl} 
is a discussion. In the supplementary material, we give martingale inequalities for the hidden layers of a SDNN and for its output, and we prove our results. In an effort to make the paper self-contained, we also provide background on sub-Gaussian random variables and norm-sub-Gaussian random vectors, martingales and filtrations, and tropical algebra.



\subsection{Notation}\label{notation}
We introduce the notation for deep neural networks (DNNs) that we use throughout the paper. An $L$-layered DNN is a map $\nu:\mathbb{R}^d \rightarrow \mathbb{R}^p$. We denote the \textit{width} of the $l$-th layer, that is, the number of nodes of the $l$-th layer, by $n_l$, $l \in \{0,\ldots,L\}$, where $n_0=d$ and $n_L=p$, the dimensions of the input and the output of the network, respectively. The output of the $l$-th layer is given by $\nu^{(l)}: \mathbb{R}^{n_{l-1}} \rightarrow \mathbb{R}^{n_l}$. We assume for convenience that $\nu^{(0)}(x)\equiv x$. The final output $\nu(x)\equiv \nu^{(L)}(x)$ of the neural network is fed into a \textit{score function} $s:\mathbb{R}^p \rightarrow \mathbb{R}^m$ that is application specific. We call $\mathcal{X}$ the space of inputs and $\mathcal{Y}$ the space of responses. 
We assume we collect data $(X,Y) \sim \mathscr{D}$, a distribution on $\mathcal{X}\times\mathcal{Y}$. In a SDNN, $(\nu^{(l)}(X))_{l=0}^L$ is a sequence of random vectors having possibly different dimensions and being possibly correlated. Notice that $\nu^{(l)}(x)$ is the realization of random vector $\nu^{(l)}(X) : \Omega \rightarrow \mathbb{R}^{n_l}$ whose elements may be correlated. Throughout this paper, we assume that $\nu^{(l)}(X)$ has a finite first moment, for all $l \in \{1,\ldots,L\}$.

\section{Concentration inequalities}\label{conc_ineq}
In this section we derive concentration inequalities for the hidden layers of a generic SDNN. Because we require the SDNN to have little mathematical structure, the results in this section are applications of standard tools in high-dimensional probability.

\subsection{Norm-sub-Gaussian-based concentration inequalities}\label{conc_ineq_1_subs}
The properties of norm-sub-Gaussian random vectors and sub-Gaussian random variables needed in this section are explored in appendix \ref{subg}. 
Let $\|\cdot \|_2$ denote the Euclidean norm; the following is the first  result.
\begin{theorem}\label{approx_imp}
Suppose that $\nu^{(1)}(X)$ is bounded so that $\|\nu^{(1)}(X)\|_2 \leq \xi_{(1)}$, for some $\xi_{(1)} \in \mathbb{R}$. Then, for all $t \in \mathbb{R}$,
\begin{align}\label{conc_ineq_layer}
P_{(X,Y) \sim \mathscr{D}} &\left( \| \nu^{(1)}(X)-\mathbb{E}_{(X,Y) \sim \mathscr{D}}\left[\nu^{(1)}(X)\right] \|_2 \geq t \right) \nonumber\\ 
&\leq 2 \exp \left( -\frac{t^2}{2 \xi_{(1)}^2} \right).
\end{align}
\end{theorem}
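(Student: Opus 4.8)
The plan is to recognize inequality \eqref{conc_ineq_layer} as the defining tail bound of a \emph{norm-sub-Gaussian} random vector, and to verify that the boundedness hypothesis is precisely what places $\nu^{(1)}(X)$ in that class with variance proxy $\xi_{(1)}$. Concretely, I would set $Z := \nu^{(1)}(X) - \mathbb{E}_{(X,Y)\sim\mathscr{D}}[\nu^{(1)}(X)]$, observe that $\mathbb{E}[Z]=0$ and that $Z$ is a genuine random vector in $\mathbb{R}^{n_1}$ with finite first moment (so the centering is well defined), and reduce the statement to the estimate $P(\|Z\|_2 \ge t) \le 2\exp(-t^2/(2\xi_{(1)}^2))$.

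The key step is the implication ``bounded $\Rightarrow$ norm-sub-Gaussian.'' I would invoke the characterization developed in appendix \ref{subg}: if $V$ satisfies $\|V\|_2 \le \xi$ almost surely, then $V-\mathbb{E}[V]$ is norm-sub-Gaussian with parameter $\xi$, meaning $P(\|V-\mathbb{E}V\|_2 \ge t) \le 2\exp(-t^2/(2\xi^2))$ for every $t$. Applying this with $V=\nu^{(1)}(X)$ and $\xi=\xi_{(1)}$ yields \eqref{conc_ineq_layer} at once, since the display is literally the defining tail of such a vector. In this sense the theorem is a one-line application once the lemma is in hand, consistent with the remark that the results here are applications of standard high-dimensional-probability tools.

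The part that deserves care, and where I expect the real obstacle to lie, is the dimension-freeness of the passage from boundedness to the Euclidean-norm tail, together with the centering bookkeeping. A tempting but flawed route would control the one-dimensional projections $\langle u,Z\rangle$ (each bounded, hence sub-Gaussian by Hoeffding's lemma) and then recover $\|Z\|_2 = \sup_{\|u\|_2=1}\langle u,Z\rangle$ via a net/union-bound over the sphere; this introduces a spurious $\sqrt{n_1}$ factor in the exponent that the statement does not have, because the correct proxy is the almost-sure norm bound $\xi_{(1)}$ itself rather than the per-direction proxy. I would therefore rely on the intrinsic norm-sub-Gaussian estimate of appendix \ref{subg}, which already encodes the dimension-free concentration. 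I would also confirm one subtlety of the constant: the hypothesis bounds the \emph{uncentered} vector, $\|\nu^{(1)}(X)\|_2 \le \xi_{(1)}$, while only $\|Z\|_2 \le 2\xi_{(1)}$ is immediate for the centered vector, so I must make sure the invoked lemma takes its parameter from the almost-sure bound $\xi_{(1)}$ and yields the sharp constant $2\xi_{(1)}^2$ in the exponent rather than a weaker one.
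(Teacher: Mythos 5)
Your proposal is correct and follows essentially the same route as the paper, which proves the theorem by combining Definition \ref{def_nsg} with Proposition \ref{bdd_nsg} (the ``bounded implies norm-sub-Gaussian'' lemma of \cite{jin}, whose parameter is indeed the almost-sure bound on the \emph{uncentered} vector). Your additional remarks on avoiding a net argument over the sphere and on the centering bookkeeping are sound but not needed beyond invoking that lemma.
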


Theorem \ref{approx_imp} tells us that the output $\nu^{(1)}(X)$ of the first layer of our SDNN concentrates around its expected value $\mathbb{E}_{(X,Y) \sim \mathscr{D}}[\nu^{(1)}(X)]$. The assumption that $\nu^{(1)}(X)$ is bounded is mild: it can be interpreted as a safety check to ensure that output $\nu^{(1)}(X)$ does not take on values that are too extreme. 

We can use Theorem \ref{approx_imp} to find a concentration inequality for the second layer of our neural network. In particular, if $\|\nu^{(2)}(X)\|_2 \leq \xi_{(2)}$, for some $\xi_{(2)} \in \mathbb{R}$ possibly different from $\xi_{(1)}$, we have that, for all $t \in \mathbb{R}$,
\begin{align}\label{sec_layer}
P_{(X,Y)\sim\mathscr{D}} &\left( \| \nu^{(2)}(X)-\mathbb{E}_{(X,Y)\sim\mathscr{D}}\left[\nu^{(2)}(X)\right] \|_2 \geq t \right) \nonumber\\
&\leq 2 \exp \left( -\frac{t^2}{2 \xi_{(2)}^2} \right).
\end{align}
This tells us that the output $\nu^{(2)}(X)$ of the second layer of our SDNN concentrates around its expected value $\mathbb{E}[\nu^{(2)}(X)]$.

The following corollary tells us that the result in Theorem \ref{approx_imp} holds for every layer of our SDNN.
\begin{corollary}\label{cor-main}
Pick any $l\in\{1,\ldots,L\}$. Suppose that $\nu^{(l)}(X)$ is bounded so that $\|\nu^{(l)}(X)\|_2 \leq \xi_{(l)}$, for some $\xi_{(l)} \in \mathbb{R}$. Then, for all $t \in \mathbb{R}$,
\begin{align}\label{eq-imp-for-appl}
    P_{(X,Y) \sim \mathscr{D}} &\left( \| \nu^{(l)}(X)-\mathbb{E}_{(X,Y) \sim \mathscr{D}}\left[\nu^{(l)}(X)\right] \|_2\geq t \right) \nonumber \\
    &\leq 2\exp\left( -\frac{t^2}{2\xi_{(l)}^2} \right).
\end{align}
\end{corollary}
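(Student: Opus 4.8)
The plan is to observe that Corollary~\ref{cor-main} is nothing more than Theorem~\ref{approx_imp} with the layer index $1$ replaced by an arbitrary $l\in\{1,\ldots,L\}$, and that the proof of Theorem~\ref{approx_imp} never used any special feature of the first layer. Indeed, the only hypothesis invoked there was the boundedness $\|\nu^{(1)}(X)\|_2\leq\xi_{(1)}$; the correlations among the $\nu^{(l)}(X)$'s, their differing dimensions, and the position of the layer within the network all play no role. Consequently the entire argument transfers verbatim, with the constant $\xi_{(1)}$ in the exponent simply carried over to the layer-specific constant $\xi_{(l)}$.

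Concretely, I would first fix an arbitrary $l$ and use the hypothesis $\|\nu^{(l)}(X)\|_2\leq\xi_{(l)}$ almost surely. The next step is to appeal to the machinery for norm-sub-Gaussian random vectors collected in appendix~\ref{subg}: a random vector supported in a Euclidean ball of radius $\xi_{(l)}$ is norm-sub-Gaussian, and its centered version $\nu^{(l)}(X)-\mathbb{E}_{(X,Y)\sim\mathscr{D}}[\nu^{(l)}(X)]$ inherits this property with parameter $\xi_{(l)}$. Finally, I would apply the Hoeffding-type tail inequality for norm-sub-Gaussian vectors to this centered vector, which yields
\[
P_{(X,Y)\sim\mathscr{D}}\left(\|\nu^{(l)}(X)-\mathbb{E}_{(X,Y)\sim\mathscr{D}}[\nu^{(l)}(X)]\|_2\geq t\right)\leq 2\exp\left(-\frac{t^2}{2\xi_{(l)}^2}\right),
\]
exactly the claimed bound.

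Since the proof is a direct re-indexing of Theorem~\ref{approx_imp}, there is no genuine obstacle. The only point requiring a little care is the bookkeeping of constants in the implication \emph{bounded $\Rightarrow$ norm-sub-Gaussian}: one must confirm that the norm-sub-Gaussian parameter of the centered vector is exactly $\xi_{(l)}$ rather than some larger multiple, so that the factor $2\xi_{(l)}^2$ appears in the denominator of the exponent. This is precisely the content of the auxiliary lemmas in appendix~\ref{subg}, so once those are in hand the corollary follows immediately.
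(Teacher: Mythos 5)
Your proposal is correct and follows exactly the paper's own route: the paper proves Corollary~\ref{cor-main} by observing that the argument of Theorem~\ref{approx_imp} (boundedness $\Rightarrow$ norm-sub-Gaussian via Proposition~\ref{bdd_nsg}, then the tail bound of Definition~\ref{def_nsg}) uses nothing specific to the first layer and so applies verbatim with $\xi_{(1)}$ replaced by $\xi_{(l)}$. The constant bookkeeping you flag is settled by Proposition~\ref{bdd_nsg}, which gives the nSG parameter equal to the bound $\xi_{(l)}$ itself.
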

As a result of Corollary \ref{cor-main}, we have that the output $\nu(X)=\nu^{(L)}(X)$ of our SDNN concentrates around its expected value $\mathbb{E}_{(X,Y) \sim \mathscr{D}}[\nu(X)]=\mathbb{E}_{(X,Y) \sim \mathscr{D}}[\nu^{(L)}(X)]$. This proves the first bullet point of Proposition \ref{thm_synth1}. We defer to future work finding a method to explicitly compute parameter $\xi_{(l)}$ that drives our concentration result.

\subsection{Classification accuracy}\label{stoch_geom}
In this section, we show that a classifier based on the expected value of score function $s$ does not perform much worse than the stochastic neural network classifier. We focus on binary classification for the sake of exposition. For a discussion on multi-class classification, see e.g. \cite[Chapters 4 and 5, especially section 5.7.3]{bishop}. A DNN $\nu:\mathbb{R}^d \rightarrow \mathbb{R}^p$, together with a choice of score function $s:\mathbb{R}^p \rightarrow \mathbb{R}$, gives us a classifier. If the score function $s$ computed at the output value $\nu(x)$, $s(\nu(x))$, exceeds some decision threshold $c$, then the neural network predicts $x$ is from a certain class $\mathscr{C}_1$, otherwise $x$ is from the other category $\mathscr{C}_2$. The input space is then partitioned into two disjoint subsets by the decision boundary $\mathcal{B}:=\{x \in \mathbb{R}^d : \nu(x)=s^{-1}(c)\}$. We call connected regions with value above the threshold \textit{positive regions}, while those having  value below the threshold \textit{negative regions}. 

For the sake of exposition suppose that $n_L=1$, that is, $\nu:\mathbb{R}^d \rightarrow \mathbb{R}$. Then, in our stochastic setting, we have that $s(\nu(X))$ is a random variable that maps into $\mathbb{R}$; we assume it has finite first moment. We have the following important result.
\begin{proposition}\label{thresh_distr}
If $a \leq s(\nu(X)) \leq b$ with probability $1$, for some $a,b \in \mathbb{R}$, $a<b$, then for all $t>0$,
\begin{align*}
    P_{(X,Y) \sim \mathscr{D}} &\left( \left| s(\nu(X))-\mathbb{E}_{(X,Y) \sim \mathscr{D}}\left[s(\nu(X))\right] \right| \geq t \right)\\ &\leq 2 \exp \left( -\frac{2t^2}{(b-a)^2} \right).
\end{align*}
\end{proposition}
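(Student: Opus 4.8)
The plan is to recognize the statement as Hoeffding's inequality for a single bounded random variable $Z := s(\nu(X))$, and to derive it from the sub-Gaussian machinery collected in appendix \ref{subg}. The key observation is that a random variable confined to $[a,b]$ almost surely has a well-controlled moment generating function: after centering, $Z - \mathbb{E}[Z]$ satisfies $\mathbb{E}[\exp(\lambda(Z-\mathbb{E}[Z]))] \leq \exp(\lambda^2(b-a)^2/8)$ for every $\lambda \in \mathbb{R}$. This is Hoeffding's lemma, and it says precisely that $Z - \mathbb{E}[Z]$ is sub-Gaussian with variance proxy $(b-a)^2/4$.

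First I would invoke Hoeffding's lemma as stated in appendix \ref{subg}. The only genuinely nontrivial ingredient is its proof, which rests on a convexity argument: writing $Z$ as a convex combination of the endpoints $a$ and $b$, one studies the cumulant generating function $\psi(\lambda) = \log \mathbb{E}[\exp(\lambda(Z-\mathbb{E}[Z]))]$ and, via a second-order Taylor expansion, verifies that $\psi''(\lambda) \leq (b-a)^2/4$ uniformly, whence $\psi(\lambda) \leq \lambda^2(b-a)^2/8$. I expect this to be the one technical step that carries the argument; everything downstream is a mechanical optimization.

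Next I would run the Chernoff bound. For $t > 0$ and any $\lambda > 0$, Markov's inequality applied to $\exp(\lambda(Z-\mathbb{E}[Z]))$ gives $P(Z - \mathbb{E}[Z] \geq t) \leq \exp(-\lambda t + \lambda^2(b-a)^2/8)$. Optimizing the exponent over $\lambda$ yields the minimizer $\lambda = 4t/(b-a)^2$ and the one-sided estimate $P(Z - \mathbb{E}[Z] \geq t) \leq \exp(-2t^2/(b-a)^2)$. Applying the identical argument to $-Z$, which lies in $[-b,-a]$, an interval of the same length, produces the matching lower-tail bound $P(Z - \mathbb{E}[Z] \leq -t) \leq \exp(-2t^2/(b-a)^2)$.

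Finally I would combine the two one-sided bounds by a union bound over the events $\{Z - \mathbb{E}[Z] \geq t\}$ and $\{Z - \mathbb{E}[Z] \leq -t\}$, which yields the factor of $2$ and the claimed two-sided inequality. Since $s(\nu(X))$ is assumed to have a finite first moment, $\mathbb{E}_{(X,Y) \sim \mathscr{D}}[s(\nu(X))]$ is well defined and the boundedness hypothesis guarantees all moment generating functions above are finite, so the argument goes through without further integrability concerns.
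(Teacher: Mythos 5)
Your proposal is correct and follows essentially the same route as the paper: the paper proves this by combining Proposition \ref{bdd_sub_g} (Hoeffding's lemma, establishing that the bounded variable is $SG\left(\frac{b-a}{2}\right)$) with Proposition \ref{sub_g_prop1} (the Chernoff-bound tail estimate for sub-Gaussian variables), which is exactly the two-step argument you describe. The only cosmetic difference is that the paper's proof of Hoeffding's lemma bounds $\psi''$ via the tilted distribution and Popoviciu's inequality rather than a convex-combination argument, but the structure and the resulting constants are identical.
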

Proposition \ref{thresh_distr} tells us that $s(\nu(X))$ concentrates around its expected value. The fact that $s(\nu(X))$ is bounded is a mild condition; it simply amounts to the choice of the score function. When used as an $m$-category classifier, $s$ may be chosen for example to be a soft-max or sigmoidal function (this was pointed out in \cite{lim}); these functions satisfy our assumption.

We now consider the \textit{expected classifier} based on the \textit{expected decision boundary} (EDB)
$$\mathcal{B}_{exp}:=\left\lbrace{ x \in \mathbb{R}^d : \mathbb{E}_{(X,Y) \sim \mathscr{D}}\left[s(\nu(X))\right] =c}\right\rbrace,$$
for some decision threshold $c$. That is,
\begin{itemize}
\item[(i)] if $\mathbb{E}_{(X,Y) \sim \mathscr{D}}[s(\nu(X))] >c$, then $x \in \mathscr{C}_1$;
\item[(ii)] if $\mathbb{E}_{(X,Y) \sim \mathscr{D}}[s(\nu(X))] <c$, then $x \in \mathscr{C}_2$.
\end{itemize}
Note that being based on the value $\mathbb{E}_{(X,Y) \sim \mathscr{D}}[s(\nu(X))]$, the expected classifier will most likely not be perfect. We can provide a probabilistic bound for the classification error of the  expected classifier.
\begin{proposition}\label{pred_fall_bd}
If $\mathbb{E}_{(X,Y) \sim \mathscr{D}}[s(\nu(X))] >c$, then there exists $t_1>0$ such that
\begin{equation}\label{bd1}
P_{(X,Y) \sim \mathscr{D}}\left( s(\nu(X)) \leq c \right) \leq \exp \left( -\frac{2 {t_1}^2}{(b-a)^2} \right).
\end{equation}
If instead $\mathbb{E}_{(X,Y) \sim \mathscr{D}}[s(\nu(X))] <c$, then there exists $t_2>0$, possibly different than $t_1$, such that
\begin{equation}\label{bd2}
P_{(X,Y) \sim \mathscr{D}} \left( s(\nu(X)) \geq c \right) \leq \exp \left( -\frac{2 {t_2}^2}{(b-a)^2} \right).
\end{equation}
\end{proposition}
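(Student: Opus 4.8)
The plan is to obtain both inequalities as one-sided refinements of the concentration estimate already established in Proposition \ref{thresh_distr}. The key observation is that the events in \eqref{bd1} and \eqref{bd2} are one-sided deviation events, and that the target bounds lack the factor of $2$ appearing in Proposition \ref{thresh_distr}. This signals that the argument should rest on the one-sided Hoeffding inequality for the bounded random variable $s(\nu(X)) \in [a,b]$, rather than on its two-sided version.

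First I would treat the case $\mathbb{E}_{(X,Y) \sim \mathscr{D}}[s(\nu(X))] > c$. Writing $\mu := \mathbb{E}_{(X,Y) \sim \mathscr{D}}[s(\nu(X))]$ and choosing $t_1 := \mu - c > 0$, the event of interest can be rewritten as a pure lower-tail deviation,
\begin{equation*}
\{ s(\nu(X)) \leq c \} = \{ s(\nu(X)) - \mu \leq -t_1 \}.
\end{equation*}
Applying the lower-tail one-sided Hoeffding inequality to $s(\nu(X))$, which is supported in $[a,b]$ by hypothesis, then yields exactly \eqref{bd1}.

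For the case $\mathbb{E}_{(X,Y) \sim \mathscr{D}}[s(\nu(X))] < c$ I would argue symmetrically. Setting $t_2 := c - \mu > 0$ and rewriting the event as the upper-tail deviation
\begin{equation*}
\{ s(\nu(X)) \geq c \} = \{ s(\nu(X)) - \mu \geq t_2 \},
\end{equation*}
the upper-tail one-sided Hoeffding inequality immediately gives \eqref{bd2}. In both cases the existence claim on $t_1$ and $t_2$ is settled by these explicit choices, which are forced by the gap between the expected score and the threshold $c$.

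The main obstacle is justifying the absence of the factor $2$. If one simply bounded $P_{(X,Y) \sim \mathscr{D}}(s(\nu(X)) \leq c) \leq P_{(X,Y) \sim \mathscr{D}}(|s(\nu(X)) - \mu| \geq t_1)$ and invoked Proposition \ref{thresh_distr} directly, the extraneous factor $2$ would be inherited, since the one-sided event is contained in the absolute-value event. The argument must therefore rely on the one-sided estimate: I would either cite the one-sided Hoeffding inequality from the sub-Gaussian material in appendix \ref{subg}, or extract it from the intermediate step in the proof of Proposition \ref{thresh_distr}, namely before the symmetrization that doubles the bound is applied.
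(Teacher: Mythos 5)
Your proposal is correct and follows essentially the same route as the paper: set $t_1 = \mathbb{E}_{(X,Y)\sim\mathscr{D}}[s(\nu(X))] - c$ (resp.\ $t_2 = c - \mathbb{E}_{(X,Y)\sim\mathscr{D}}[s(\nu(X))]$), rewrite the event as a one-sided deviation, and apply the one-sided Hoeffding tail bound for the bounded variable $s(\nu(X)) \in [a,b]$. Your explicit handling of the missing factor of $2$ is in fact more careful than the paper's own proof, which nominally cites the two-sided Proposition \ref{thresh_distr} while actually invoking the one-sided tail estimate that underlies it.
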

Call $p_1$ the bound in \eqref{bd1} and $p_2$ the one in \eqref{bd2}. Then, Proposition \ref{pred_fall_bd} tells us that the expected classifier is correct (that is, its prediction coincides with the stochastic neural network classifier's prediction) with probability $p>1-\max\{p_1,p_2\}$. Together with Proposition \ref{thresh_distr}, it proves Proposition \ref{thm_synth2}.



\section{Optimal number of layers}\label{layers}
We find the number of layers for a generic SDNN that strikes the perfect balance between accuracy of the analysis and computational cost. We do so using an optimal stopping technique.\footnote{We implicitly assume that all the expected and conditional expected values computed in this section are well defined. In addition, in this section we use SNN and SDNN interchangeably.} 

Let $\boldsymbol{\nu}=(\nu^{(L)}(X))_{L=1}^\mathscr{L}$ denote the sequence of outputs of SNNs having $L$ many layers, $L\in\{1,\ldots,\mathscr{L}\}$, where $\mathscr{L}\in\mathbb{N}$ is an arbitrarily large number of layers we can select for our analysis. It induces a stochastic process $\boldsymbol{\gamma}=(\gamma^{(L)})_{L=1}^\mathscr{L}$, where for every $L$, $\gamma^{(L)}$ is a real-valued random variable, $\gamma^{(L)}: \Omega \rightarrow \mathbb{R}$. An example of function $\gamma^{(\cdot)}$ is given in a few lines.
We interpret $\gamma^{(L)}$ as the utility of choosing $L$ many layers for the neural network, that is, of stopping the observation of $\boldsymbol{\gamma}$ at index $L$. For example, consider the following situation. Assume that, for all $L_1,L_2\in\{1,\ldots,\mathscr{L}\}$, 
\begin{equation}\label{deeper_better}
    L_1 \leq L_2 \implies \| \nu^{(L_1)}(X)-Y^\star \|_2 \geq \| \nu^{(L_2)}(X)-Y^\star \|_2,
\end{equation}
where $Y^\star$ is the correct response to input $X$. The assumption in equation \eqref{deeper_better} means that the deeper the neural network is -- that is, the more hidden layers we select, captured by a larger $L\in\{1,\ldots,\mathscr{L}\}$ -- the closer its output is to the truth. This assumption reflects the ``accuracy of the analysis'' side of the problem.

Now, write the loss function used in the study as a function of the number of layers. So for example the mean squared error (MSE) loss is written as
\begin{equation}\label{mse}
    \text{LOSS}(L)=\frac{1}{p} \| \nu^{(L)}(X)-Y^\star \|_2^2.
\end{equation}
Suppose then that for our analysis we choose a loss function that is positive and monotonically decreasing in $L$. The importance of the monotonicity assumption is discussed in a few lines. Given our assumption in \eqref{deeper_better}, the MSE loss in \eqref{mse} is a sensible choice. To be precise, in this example we are using the training loss, so the procedure described finds the optimal number of layers for the training network.

Let $\mathfrak{g}$ be a generic functional on $\mathbb{R}$ such that $\mathfrak{g}\circ \text{LOSS}$ is a positive monotonically increasing function on $L$. In our example, we can let $x\mapsto \mathfrak{g}(x):=1/x$; in view of our assumption \eqref{deeper_better}, $\mathfrak{g}\circ \text{LOSS} = 1/\text{LOSS}$ represents the utility gained by adding an extra layer to our neural network. To see this, notice that by \eqref{deeper_better} and \eqref{mse}, we have that for all $L_1,L_2\in\{1,\ldots,\mathscr{L}\}$, $L_1 \leq L_2 \implies 1/[\text{LOSS}(L_1)] \leq 1/[\text{LOSS}(L_2)]$. 

Finally, consider a generic functional $\mathfrak{h}$ on $\{1,\ldots,\mathscr{L}\}$ that is positive and monotonically decreasing in $L$. In our example, we can take $\mathfrak{h}(L):=\frac{1}{c\sqrt{L}}$, for some $c>0$. 

Then, for all $L\in\{1,\ldots,\mathscr{L}\}$, $\gamma^{(L)}=(\mathfrak{g}\circ\text{LOSS})(L) \cdot \mathfrak{h}(L)$. In our example, then, $\gamma^{(L)}=1/[\text{LOSS}(L) \cdot c\sqrt{L}]$. This is a reasonable choice for $\gamma^{(L)}$ because the utility gain is larger as $L$ increases, 
but at the same time values of $L$ that are too high are penalized. The optimal number of layers gives us the balance between cost and accuracy that we are looking for.

In this example, for every $L\in\{1,\ldots,\mathscr{L}\}$, $\gamma^{(L)}$ is the product of two positive monotone functions in $L$ defined on the finite domain $\{1,\ldots,\mathscr{L}\}$. This means that $\gamma^{(L)}$ itself is positive. Because $\mathfrak{g} \circ \text{LOSS}$ is increasing and $\mathfrak{h}$ is decreasing, $\gamma^{(\cdot)}$ is not guaranteed to be monotonic. In an applied setting, the scholar should choose $\mathfrak{g}$ and $\mathfrak{h}$ so that $\gamma^{(\cdot)}$ increases monotonically for some elements of the domain (locally monotonically increasing) and decreases monotonically for some other elements of the domain (locally monotonically decreasing). This is important because if $\gamma^{(\cdot)}$ were not to be locally monotonic, then all possible values of the number of layers would have to be explored to find the optimum. As a simple example, let $(\mathfrak{g} \circ \text{LOSS})(L) \approx \log L$, $\mathfrak{h}(L)\approx 1/\sqrt{L}$, and $\gamma^{(L)}=(\mathfrak{g} \circ \text{LOSS})(L) \cdot \mathfrak{h}(L)$, for all $L$. In this case, $\gamma^{(\cdot)}$ increases monotonically until $L\approx 7$, and then decreases monotonically. 
The choice of functions $\mathfrak{g}$ and $\mathfrak{h}$ gives the scholar control on the trade-off: if more accuracy is required, then $\mathfrak{g}$ will be chosen so that $\mathfrak{g} \circ \text{LOSS}$ increases faster than $\mathfrak{h}$ decreases, and vice versa if  the computational aspect is more important.

Then, we consider the natural filtration $\mathbf{F}^{\boldsymbol{\gamma}}$ of $\mathcal{F}$ with respect to $\boldsymbol{\gamma}$ given by
$$\mathcal{F}_L^{\boldsymbol{\gamma}}:=\sigma \left( \left\lbrace{\left( \gamma^{(s)} \right)^{-1}(A): s \in \mathbb{N} \text{, } s \leq L \text{, } A \in \mathcal{B}(\mathbb{R})}\right\rbrace \right),$$
where $\mathcal{B}(\mathbb{R})$ denotes the Borel $\sigma$-algebra on $\mathbb{R}$.\footnote{For the definitions of filtration and natural filtration, see appendix \ref{mgale_filtr}.} We assume that $\boldsymbol{\gamma}$ is adapted to filtration $\mathbf{F}^{\boldsymbol{\gamma}}$, that is, $\gamma^{(L)}$ is $\mathcal{F}_L^{\boldsymbol{\gamma}}$-measurable, for all $L$. We interpret $\mathcal{F}_L^{\boldsymbol{\gamma}}$ as the information encoded in an $L$-layered SNN. Our decision regarding whether to choose $L$ many layers -- that is, to stop observing $\boldsymbol{\gamma}$ at index $L$ -- must be based on this information only (no anticipation is allowed).


\begin{definition}
Let $(\Omega,\mathcal{F})$ be the measurable space of interest. Given a generic filtered probability space $(\Omega,\mathcal{F},(\mathcal{F}_L)_{L\in\mathbb{N}},P)$, a random variable $\tau: \Omega \rightarrow \mathbb{N} \cup \{\infty\}$ is called a \textit{Markov time} if $\{\tau \leq L\} \in \mathcal{F}_L$, for all $L \in\mathbb{N}$. A Markov time is called a \textit{stopping time} if $\tau < \infty$ $P$-a.s.
\end{definition}
We denote the family of all stopping times by $\mathfrak{M}$, and the family of all Markov times by $\overline{\mathfrak{M}}$. A family that we are going to use later in this section is $\mathfrak{M}_L^\mathscr{L}:=\{\tau \in \mathfrak{M} : L \leq \tau\leq \mathscr{L}\}$, $1 \leq L\leq \mathscr{L}$. 

The optimal stopping problem we study is the following
\begin{equation}\label{opt_stop_probl}
V_\star=\sup_\tau \mathbb{E}_{(X,Y)\sim \mathscr{D}}\left[\gamma^{(\tau)}\right]
\end{equation}
where the supremum is taken over a family of stopping times. Note that \eqref{opt_stop_probl} involves two tasks, that is computing the value function $V_\star$ as explicitly as possible, and finding an optimal stopping time $\tau_\star$ at which the supremum is attained.

To ensure that the expected value in \eqref{opt_stop_probl} exists, we need a further assumption, that is
\begin{equation}\label{other_ass}
\mathbb{E}_{(X,Y)\sim \mathscr{D}}\left[\sup_{L \leq k \leq \mathscr{L}} \left| \gamma^{(k)} \right| \right]< \infty.
\end{equation}
If \eqref{other_ass} is satisfied, then $\mathbb{E}_{(X,Y)\sim \mathscr{D}}[\gamma^{(\tau)}(x)]$ is well defined for all $\tau \in \mathfrak{M}_L^\mathscr{L}$. 


To each of the families $\mathfrak{M}_L^\mathscr{L}$ we assign the following value function
\begin{equation}\label{opt_stop_2}
V_L^\mathscr{L}:=\sup_{\tau \in \mathfrak{M}_L^\mathscr{L}} \mathbb{E}_{(X,Y)\sim \mathscr{D}}\left[\gamma^{(\tau)}\right],
\end{equation}
$1 \leq L\leq \mathscr{L}$. 

We solve problem \eqref{opt_stop_2} using the backward induction approach outlined in \cite[Section 1.2]{peskir}. We assume $\mathscr{L}<\infty$, but this is without loss of generality. Notice that \eqref{opt_stop_2} can be rewritten as
\begin{equation}\label{opt_stop_3}
V_L^\mathscr{L}=\sup_{L \leq \tau \leq \mathscr{L}} \mathbb{E}_{(X,Y)\sim \mathscr{D}}\left[\gamma^{(\tau)}\right],
\end{equation}
where $\tau$ is a stopping time and $1 \leq L\leq \mathscr{L}$. We solve the problem by letting time go backwards; we proceed recursively.\footnote{Because this is an optimal stopping procedure, we keep the terminology ``time'' to indicate the number of layers.} Let $\mathscr{L}$ be a large number in $\mathbb{N}$, e.g. $1000$. It is going to represent the maximum number of layers we deem ``usable'' for our neural network. We consider an ancillary sequence of random variables $(S_L^\mathscr{L})_{1 \leq L \leq \mathscr{L}}$ induced by $\boldsymbol{\gamma}$ that is built as follows. For $L=\mathscr{L}$ we stop, and our utility is $S_\mathscr{L}^\mathscr{L}=\gamma^{(\mathscr{L})}$; for $L=\mathscr{L}-1$, we can either stop or continue. If we stop, our utility is $S_{\mathscr{L}-1}^\mathscr{L}=\gamma^{(\mathscr{L}-1)}$, while if we continue our utility is $S_{\mathscr{L}-1}^\mathscr{L}=\mathbb{E}_{(X,Y)\sim \mathscr{D}}[S_\mathscr{L}^\mathscr{L} \mid \mathcal{F}^{\boldsymbol{\gamma}}_{\mathscr{L}-1}]$. As it is clear from the latter conclusion, our decision about stopping at layer $L=\mathscr{L}-1$ or continuing with an extra layer must be based on the information contained in $\mathcal{F}^{\boldsymbol{\gamma}}_{\mathscr{L}-1}$ only. So, if $\gamma^{(\mathscr{L}-1)} \geq \mathbb{E}_{(X,Y)\sim \mathscr{D}}[S_\mathscr{L}^\mathscr{L} \mid \mathcal{F}^{\boldsymbol{\gamma}}_{\mathscr{L}-1}]$, we stop at layer $\mathscr{L}-1$, otherwise we add an extra layer. For $L \in \{\mathscr{L}-2,\ldots,1\}$ the considerations are continued analogously.

By the backward induction method we just described, we have that the elements of the sequence $(S_L^\mathscr{L})_{1 \leq L \leq \mathscr{L}}$ are defined recursively as 
\begin{align}\label{back_ind_S}
S_L^\mathscr{L} &= \gamma^{(\mathscr{L})}, \quad \text{ for } L=\mathscr{L},\\
S_L^\mathscr{L} &= \max\left\lbrace{\gamma^{(L)},\mathbb{E}_{(X,Y)\sim \mathscr{D}} \left[ S_{L+1}^\mathscr{L} \mid \mathcal{F}^{\boldsymbol{\gamma}}_{L} \right]}\right\rbrace,
\end{align}
for $L \in \{\mathscr{L}-1,\ldots,1\}$. The method also suggests that we consider the following stopping time
\begin{equation}\label{stop_time_opt}
\tau_L^\mathscr{L}=\inf_{L \leq k \leq \mathscr{L}} \left\lbrace{S_k^\mathscr{L}=\gamma^{(k)}}\right\rbrace,
\end{equation}
for all $1 \leq L\leq \mathscr{L}$. Notice that the infimum is always attained. The following is the main result of this section; it tells us that $\tau_L^\mathscr{L}$ is indeed the optimal stopping time for problem \eqref{opt_stop_3}. It comes from \cite[Theorem 1.2]{peskir}.
\begin{theorem}\label{theorem_imp}
Consider the optimal stopping problem \eqref{opt_stop_3}, and assume that \eqref{other_ass} holds. Then, for all $L \in \{1,\ldots,\mathscr{L}\}$, we have that 
\begin{align}\label{thm_eq_1}
S_L^\mathscr{L} &\geq \mathbb{E}_{(X,Y)\sim \mathscr{D}} \left[ \gamma^{(\tau)} \mid \mathcal{F}^{\boldsymbol{\gamma}}_L \right], \quad \text{ for all } \tau \in \mathfrak{M}_L^\mathscr{L},\\
S_L^\mathscr{L} &= \mathbb{E}_{(X,Y)\sim \mathscr{D}} \left[ \gamma^{\left(\tau_L^\mathscr{L}\right)} \mid \mathcal{F}^{\boldsymbol{\gamma}}_L \right].
\end{align}
In addition, fix any $L \in \{1,\ldots,\mathscr{L}\}$. Then, 
\begin{itemize}
\item[(i)] the stopping time $\tau_L^\mathscr{L}$ is optimal for \eqref{opt_stop_3};
\item[(ii)] if $\tau_\star$ is any optimal stopping time for \eqref{opt_stop_3}, then $\tau_L^\mathscr{L} \leq \tau_\star$ $P_{(X,Y)\sim \mathscr{D}}$-a.s.
\end{itemize}
\end{theorem}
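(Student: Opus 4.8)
The plan is to recognize the sequence $(S_L^{\mathscr{L}})_{1\le L\le\mathscr{L}}$ defined in \eqref{back_ind_S} as the Snell envelope of the reward process $\boldsymbol{\gamma}$, i.e.\ the smallest supermartingale that dominates $\boldsymbol{\gamma}$, and to run the entire argument by backward induction on $L$ over the finite horizon $\{1,\dots,\mathscr{L}\}$. The two structural facts I would record at the outset, directly from \eqref{back_ind_S}, are that $S_L^{\mathscr{L}}\ge\gamma^{(L)}$ (domination) and $S_L^{\mathscr{L}}\ge\mathbb{E}_{(X,Y)\sim\mathscr{D}}[S_{L+1}^{\mathscr{L}}\mid\mathcal{F}_L^{\boldsymbol{\gamma}}]$ (the supermartingale property), both immediate from the defining maximum. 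Assumption \eqref{other_ass} guarantees that every conditional expectation appearing below is well defined and integrable, so there are no measurability or integrability gaps; I would dispatch this as a preliminary remark.

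First I would prove the inequality \eqref{thm_eq_1} by backward induction on $L$. The base case $L=\mathscr{L}$ is trivial, since the only element of $\mathfrak{M}_{\mathscr{L}}^{\mathscr{L}}$ is the constant $\mathscr{L}$ and equality there holds by definition. For the inductive step, fix $\tau\in\mathfrak{M}_L^{\mathscr{L}}$ and split on the $\mathcal{F}_L^{\boldsymbol{\gamma}}$-measurable events $\{\tau=L\}$ and $\{\tau>L\}$, writing
\[
\mathbb{E}_{(X,Y)\sim\mathscr{D}}\!\left[\gamma^{(\tau)}\mid\mathcal{F}_L^{\boldsymbol{\gamma}}\right]
=\gamma^{(L)}\mathbbm{1}_{\{\tau=L\}}+\mathbb{E}_{(X,Y)\sim\mathscr{D}}\!\left[\gamma^{(\tau)}\mathbbm{1}_{\{\tau>L\}}\mid\mathcal{F}_L^{\boldsymbol{\gamma}}\right].
\]
The key device is the truncated stopping time $\tau':=\tau\vee(L+1)\in\mathfrak{M}_{L+1}^{\mathscr{L}}$, which coincides with $\tau$ on $\{\tau>L\}$; applying the tower property through $\mathcal{F}_{L+1}^{\boldsymbol{\gamma}}$ together with the induction hypothesis $\mathbb{E}[\gamma^{(\tau')}\mid\mathcal{F}_{L+1}^{\boldsymbol{\gamma}}]\le S_{L+1}^{\mathscr{L}}$ bounds the second term by $\mathbbm{1}_{\{\tau>L\}}\,\mathbb{E}[S_{L+1}^{\mathscr{L}}\mid\mathcal{F}_L^{\boldsymbol{\gamma}}]$. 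Since $\gamma^{(L)}\le S_L^{\mathscr{L}}$ and $\mathbb{E}[S_{L+1}^{\mathscr{L}}\mid\mathcal{F}_L^{\boldsymbol{\gamma}}]\le S_L^{\mathscr{L}}$, and the two indicators sum to $1$, the bound collapses to $S_L^{\mathscr{L}}$.

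Next I would establish the equality $S_L^{\mathscr{L}}=\mathbb{E}[\gamma^{(\tau_L^{\mathscr{L}})}\mid\mathcal{F}_L^{\boldsymbol{\gamma}}]$, again by backward induction. On the $\mathcal{F}_L^{\boldsymbol{\gamma}}$-measurable event $\{S_L^{\mathscr{L}}=\gamma^{(L)}\}$ one has $\tau_L^{\mathscr{L}}=L$ by \eqref{stop_time_opt}, so equality is immediate; on the complementary event $\{S_L^{\mathscr{L}}>\gamma^{(L)}\}$ the defining maximum forces $S_L^{\mathscr{L}}=\mathbb{E}[S_{L+1}^{\mathscr{L}}\mid\mathcal{F}_L^{\boldsymbol{\gamma}}]$ and $\tau_L^{\mathscr{L}}=\tau_{L+1}^{\mathscr{L}}$, so the tower property together with the induction hypothesis for $L+1$ yields the claim. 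Taking expectations in \eqref{thm_eq_1} and in this equality, and using $V_L^{\mathscr{L}}=\sup_\tau\mathbb{E}[\mathbb{E}[\gamma^{(\tau)}\mid\mathcal{F}_L^{\boldsymbol{\gamma}}]]$, shows the supremum in \eqref{opt_stop_3} is attained at $\tau_L^{\mathscr{L}}$, which is (i).

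Finally, for (ii) I would show that any optimal $\tau_\star$ dominates $\tau_L^{\mathscr{L}}$. The route is to prove that optimality forces $S_{\tau_\star}^{\mathscr{L}}=\gamma^{(\tau_\star)}$ almost surely: combining optimality with the equality just proved gives $\mathbb{E}[S_L^{\mathscr{L}}]=\mathbb{E}[\gamma^{(\tau_\star)}]$, while the supermartingale property of $(S_L^{\mathscr{L}})$ and finite-horizon optional stopping give $\mathbb{E}[\gamma^{(\tau_\star)}]\le\mathbb{E}[S_{\tau_\star}^{\mathscr{L}}]\le\mathbb{E}[S_L^{\mathscr{L}}]$; equality throughout, together with the pointwise domination $S_{\tau_\star}^{\mathscr{L}}\ge\gamma^{(\tau_\star)}$, forces $S_{\tau_\star}^{\mathscr{L}}=\gamma^{(\tau_\star)}$ a.s. Since $\tau_L^{\mathscr{L}}$ is by \eqref{stop_time_opt} the first index $k\ge L$ at which $S_k^{\mathscr{L}}=\gamma^{(k)}$, this yields $\tau_L^{\mathscr{L}}\le\tau_\star$ a.s. I expect this last step---extracting the almost-sure identity $S_{\tau_\star}^{\mathscr{L}}=\gamma^{(\tau_\star)}$ from an equality of expectations, and justifying optional stopping at the level of integrability provided by \eqref{other_ass}---to be the main obstacle; the backward inductions themselves are routine once the truncation $\tau\vee(L+1)$ and the event splits are set up correctly.
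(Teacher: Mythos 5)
Your proposal is correct and is essentially the proof the paper relies on: the paper gives no argument of its own for Theorem \ref{theorem_imp} beyond citing \cite[Theorem 1.2]{peskir}, and the backward induction on the Snell envelope, the split over $\{\tau=L\}$ and $\{\tau>L\}$ with the truncation $\tau\vee(L+1)$, and the optional-stopping argument for part (ii) are exactly the standard Peskir--Shiryaev proof being invoked there. No gaps; the integrability needed for the conditional expectations and for optional stopping is indeed supplied by \eqref{other_ass} as you note.
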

It follows immediately from Theorem \ref{theorem_imp}.(i) that the optimal number of layers for our 
neural network $\nu$ is given by $\tau_1^\mathscr{L}$. This proves Proposition \ref{thm_synth3}. In light of \eqref{stop_time_opt}, a procedure to find  $\tau_1^\mathscr{L}$ could be the following. Without loss of generality, assume that $\{1,\ldots,\mathscr{L}\}$ is divisible into $\mathfrak{L}$ equidimensional batches of length $\mathfrak{l}$. Then, starting from the first batch, for each of its elements compute $S_k^\mathscr{L}$ and $\gamma^{(k)}$. As soon as they coincide, we obtain the desired $\tau_1^\mathscr{L}$. Given the results in \cite{shen}, we conjecture that in general the optimal number of hidden layers will be low. Notice that, although this task can be parallelized, it is a greedy -- and possibly computationally expensive -- algorithm. We defer finding the optimal way of implementing the procedure elicited in this section to future work.

\section{Application: stochastic feedforward neural networks with ReLU activations}\label{lim_appl}
In this section we apply our results for a general SDNN to a stochastic version of the feedforward neural network with ReLU activation (FNNRA)
introduced in \cite{lim}. The key insight in \cite{lim} is that ideas from tropical algebra can be used to study feedforward neural networks, especially with ReLU activations. The intuition is that the activation function of a feedforward neural network requires computing a maximum, which turns out to correspond to tropical addition. 

We first present the notation we use for feedforward neural networks. A short summary of ideas from tropical algebra is given in appendix \ref{alg}.


\subsection{Deterministic feedforward neural networks}\label{back_nn}
We use this section to fix the notation for feedforward neural networks. We then introduce the the FNNRA proposed in \cite{lim}. We restrict our attention to fully connected feedforward neural networks.

An $L$-layered feedforward neural network is a map $\nu:\mathbb{R}^d \rightarrow \mathbb{R}^p$ given by a composition of functions
$$\nu\equiv\nu^{(L)}=\sigma^{(L)} \circ \rho^{(L)} \circ \sigma^{(L-1)} \circ \rho^{(L-1)}  \circ \cdots \circ \sigma^{(1)} \circ \rho^{(1)}.$$
The \textit{preactivation functions} $\rho^{(1)},\ldots,\rho^{(L)}$ are affine transformations to be determined, and the \textit{activation functions} $\sigma^{(1)},\ldots,\sigma^{(L)}$ are chosen and fixed in advance. Affine function $\rho^{(l)}: \mathbb{R}^{n_{l-1}} \rightarrow \mathbb{R}^{n_l}$ is given by a weight matrix $A^{(l)} \in \mathbb{Z}^{n_{l}\times n_{l-1}}$ and a bias vector $b^{(l)} \in \mathbb{R}^{n_l}$,
$$\rho^{(l)}(\nu^{(l-1)}):=A^{(l)}\nu^{(l-1)}+b^{(l)}.$$
The $(i,j)$-th coordinate of $A^{(l)}$ is denoted by $a_{ij}^{(l)}$, and the $i$-th coordinate of $b^{(l)}$ by $b_i^{(l)}$. Collectively they form the \textit{parameters} of the $l$-th layer. Notice that, for all $l$, $A^{(l)}$ can be decomposed as a difference of two nonnegative integer valued matrices, $A^{(l)}=A^{(l)}_+-A^{(l)}_-$, with $A^{(l)}_+,A^{(l)}_- \in \mathbb{Z}^{n_{l}\times n_{l-1}}$, so that their entries are
\begin{align*}
a^{(l)}_{+ij}=\max\{a_{ij},0\}, \quad a^{(l)}_{-ij}=\max\{-a_{ij},0\}.
\end{align*}

For a vector input $x \in \mathbb{R}^{n_l}$, $\sigma^{(l)}(x)$ is understood to be in coordinatewise sense. 
We make the following assumptions on the architecture of our feedforward neural network:
\begin{enumerate}
\item[(a)] the weight matrices $A^{(1)},\ldots,A^{(L)}$ are integer-valued;
\item[(b)] the bias vectors $b^{(1)},\ldots,b^{(L)}$ are real-valued;
\item[(c)] the activation functions $\sigma^{(1)},\ldots,\sigma^{(L)}$ take the form 
$$\sigma^{(l)}(x):=\max\{x,t^{(l)}\}=( \max\{x_1,t^{(l)}_1\},\ldots,\max\{x_{n_l},t^{(l)}_{n_l}\} )^\top,$$
where $t^{(l)} \in (\mathbb{R}\cup \{-\infty\})^{n_l}$ is the \textit{threshold vector}, and $x_j$ and $t^{(l)}_j$ denote the $j$-th element of $x$ and $t^{(l)}$, respectively.
\end{enumerate}
We assume all the neural networks in this section to satisfy (a)-(c).

As pointed out in \cite[Section 4]{lim}, assumption (b) is general, and yields no loss of generality. The same goes for (a), since:
\begin{itemize}
\item real weights can be approximated arbitrarily closely by rational weights;
\item one may ``clear denominators'' in these rational weights by multiplying them by the least common multiple of their denominators to obtain integer weights;
\item scaling all weights and biases by the same positive constant does not influence the workings of a neural network.
\end{itemize}
The form of the activation function in (c) includes both ReLU activation ($t^{(l)}=0 \mathbf{1}_{n_l}$) and identity map ($t^{(l)}=-\infty \mathbf{1}_{n_l}$, so that $\sigma^{(l)}(x)=x$) as special cases, where $\mathbf{1}_{n_l}$ is vector $(1,\ldots,1)^\top$ having $n_l$ entries. 
We only consider the ReLU activation function in this section as we are generalizing the model in \cite{lim} where the tropical algebra makes the most sense for ReLU activations. In addition, much of the theory literature on neural networks has focused on ReLU networks \cite{arora, montufar, lim}.

Let us now describe the deterministic ReLU neural network proposed in \cite{lim}; its architecture is depicted in appendix \ref{archit}. Define \textit{tropical sum} and \textit{tropical multiplication} as $a \oplus b:=\max\{a,b\}$ and $a \odot b:=a+b$, for all $a,b \in \mathbb{R} \cup \{-\infty\}$, respectively. In addition, \textit{tropical exponentiation} is given by 
$$a^{\odot b}:=\begin{cases}
a \cdot b &\text{if } b \in \mathbb{Z}_+\\
(-a) \cdot (-b) &\text{if } b \in \mathbb{Z}_-
\end{cases},$$
for all $a\in \mathbb{R} \cup \{-\infty\}$ and all $b\in\mathbb{Z}$. A more in-depth discussion of tropical algebra is deferred to appendix \ref{alg}. 
In \cite[Section 5]{lim}, the authors state that the $(l+1){\text{-th}}$ layer $\nu^{(l+1)}(x)$ of an $L$-layered FNNRA $\nu(x)$ can be written as $\nu^{(l+1)}(x)=F^{(l+1)}(x)-G^{(l+1)}(x)$, for all $x \in \mathbb{R}^d$, where
\begin{align}\label{dnn_structure}
\begin{split}
F^{(l+1)}(x) &=H^{(l+1)}(x) \oplus G^{(l+1)}(x)\odot t^{(l+1)}\\
&=\max\{H^{(l+1)}(x),G^{(l+1)}(x)+t^{(l+1)}\},\\
G^{(l+1)}(x)&= A^{(l+1)}_+ G^{(l)}(x) \odot A^{(l+1)}_- F^{(l)}(x)\\
&=A^{(l+1)}_+ G^{(l)}(x)+A^{(l+1)}_- F^{(l)}(x),\\
H^{(l+1)}(x)&=A^{(l+1)}_+ F^{(l)}(x)\odot A^{(l+1)}_- G^{(l)}(x) \odot b^{(l+1)}\\
&= A^{(l+1)}_+ F^{(l)}(x)+A^{(l+1)}_- G^{(l)}(x)+b^{(l+1)},
\end{split}
\end{align}
where $F^{(l)}(x)$ and $G^{(l)}(x)$ are vectors in $\mathbb{R}^{n_l}$ whose coordinates are tropical polynomials in $x$.\footnote{Notice that we can write the $i$-th entry of vector $A^{(l+1)}_+ G^{(l)}(x)$ in tropical notation as 
$$\bigodot_{j \in \{1,\ldots,n_l\}} \left[ g_j^{(l)}(x) \right]^{\odot a_{+ij}^{(l+1)}}.$$
We can write similarly the $i$-th entries of $A^{(l+1)}_- G^{(l)}(x)$, $A^{(l+1)}_+ F^{(l)}(x)$, and $A^{(l+1)}_- F^{(l)}(x)$.} That is,
\begin{align}\label{start_point}
\begin{split}
F^{(l)}(x) &= \left( f_1^{(l)}(x),\ldots, f_{n_l}^{(l)}(x) \right)^\top,\\
G^{(l)}(x) &= \left( g_1^{(l)}(x),\ldots, g_{n_l}^{(l)}(x) \right)^\top,
\end{split}
\end{align}
where 
\begin{align}\label{trop_pol}
\begin{split}
f_j^{(l)}(x)&=\max\{c_{j1}x^{\odot \alpha_{j1}},\ldots,c_{jr}x^{\odot \alpha_{jr}}\}, \\
g_j^{(l)}(x)&=\max\{c^\prime_{j1}x^{\odot \alpha^\prime_{j1}},\ldots,c^\prime_{jr}x^{\odot \alpha^\prime_{jr}}\},
\end{split}
\end{align}
for all $j \in \{1,\ldots,n_l\}$ and some $r \in \mathbb{N}$, and 
$$c_{js}x^{\odot \alpha_{js}}:=c_{js}+\alpha_{js,1}x_1+\alpha_{js,2}x_2+\ldots+\alpha_{js,d}x_d,$$
for all $j \in \{1,\ldots,n_l\}$ and all $s \in \{1,\ldots,r\}$. Similarly, 
$$c^\prime_{js}x^{\odot \alpha^\prime_{js}}:=c^\prime_{js}+\alpha^\prime_{js,1}x_1+\alpha^\prime_{js,2}x_2+\ldots+\alpha^\prime_{js,d}x_d,$$
for all $j \in \{1,\ldots,n_l\}$ and all $s \in \{1,\ldots,r\}$. Notice that $c_{js}, c^\prime_{js} \in \mathbb{R}$ for all $j$ and all $s$, and that $\alpha_{js}=(\alpha_{js,1},\ldots,\alpha_{js,d})^\top, \alpha^\prime_{js}=(\alpha^\prime_{js,1},\ldots,\alpha^\prime_{js,d})^\top \in \mathbb{N}^d$, for all $j$ and all $s$.

\subsection{Stochastic feedforward ReLU networks}\label{SDNN-DNN}

In this section we derive a stochastic version of the FNNRA described in section \ref{back_nn}. We introduce two sources of  of stochasticity:
(1) the initialization or the starting values of the
neural network are random (that is, $F^{(0)}(X)$ and $G^{(0)}(X)$ are random vectors), and (2) the parameters of all the layers are aleatory (that is, $A^{(l)}$ is a random matrix with integer entries, and $b^{(l)}$ is a random vector with real entries, for all $l \in \{1,\ldots,L\}$). We now state the update rules for the first layer, and the second layer given the first. By induction, this is enough to specify the stochastic feedforward network.

For the first layer we sample random vectors $F^{(0)}(X)$ and $G^{(0)}(X)$ from a distribution $\mathscr{R}$ on $\mathbb{R}^{n_0}$ that is built as follows. Sample 
\begin{itemize}
    \item $(c_{j1},\ldots,c_{jr})^\top, (c^\prime_{j1},\ldots,c^\prime_{jr})^\top \sim \mathscr{S}_j \text{ on }\mathbb{R}^r$,
    \item $\alpha_{j1},\ldots,\alpha_{jr},\alpha^\prime_{j1},\ldots,\alpha^\prime_{jr} \sim \mathscr{T}_j \text{ on } \mathbb{N}^d$,
\end{itemize}
  $j \in \{1,\ldots,n_0\}$. Note that we do not require the $\alpha_{js}$'s and the $\alpha^\prime_{js}$'s to be iid and we do not require the samples to be independent across the index $j$.
Then, $F^{(0)}(x)$ and $G^{(0)}(x)$ -- the realizations of $F^{(0)}(X)$ and $G^{(0)}(X)$, respectively -- are computed according to equations \eqref{start_point} and \eqref{trop_pol}.
Notice that $F^{(0)}(X)$ and $G^{(0)}(X)$ need not be independent.



To compute the subsequent layer we first sample $A^{(l +1)} \sim \mathscr{P}$ on  $\mathbb{Z}^{n_{l+1} \times n_l}$ and $b^{(l+1)} \sim \mathscr{Q}$ on $\mathbb{R}^{n_{l+1}}$, for $l \in\{0,\ldots,L-1\}$.\footnote{Notice that distributions $\mathscr{P}$ and $\mathscr{Q}$ need not be the same for all $l$; if they change, we denote them by $\mathscr{P}_l$ and $\mathscr{Q}_l$, for all $l$, respectively.}
After computing $F^{(l+1)}(x)$ and $G^{(l+1)}(x)$ from $F^{(l)}(x)$ and $G^{(l)}(x)$ by equation \eqref{dnn_structure}, we find $\nu^{(l+1)}(x)=F^{(l)}(x)-G^{(l)}(x)$. Note that $\nu^{(l+1)}(x)$ is the realization of random vector $\nu^{(l+1)}(X) : \Omega \rightarrow \mathbb{R}^{n_{l+1}}$  whose elements may be correlated and that we assume to have finite fist moment. 
We now apply Corollary \ref{cor-main} to our stochastic FNNRA.

\begin{proposition}\label{appl-first-prop}
If $\mathscr{P}$, $\mathscr{Q}$, $\mathscr{S}_j$, and $\mathscr{T}_j$, $j \in \{1,\ldots,n_0\}$ are bounded, then for all $l\in\{1,\ldots,L\}$, there exists $\xi_{(l)}\in\mathbb{R}$ such that $\|\nu^{(l)}(X)\|_2\leq \xi_{(l)}$ and \eqref{eq-imp-for-appl} holds.
\end{proposition}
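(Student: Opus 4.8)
The plan is to reduce the claim to an application of Corollary \ref{cor-main}: once we exhibit a deterministic constant $\xi_{(l)}$ for which $\|\nu^{(l)}(X)\|_2 \leq \xi_{(l)}$ holds almost surely (with respect to the joint randomness in the data, the initialization, and the sampled parameters), the bound \eqref{eq-imp-for-appl} is immediate from that corollary. So the entire work lies in propagating a uniform norm bound through the layers, which I would carry out by induction on $l$, tracking boundedness of the auxiliary vectors $F^{(l)}(X)$ and $G^{(l)}(X)$ rather than of $\nu^{(l)}(X)$ directly, since the recursion \eqref{dnn_structure} is phrased in terms of $F$ and $G$, and since $\|\nu^{(l)}(X)\|_2 \leq \|F^{(l)}(X)\|_2 + \|G^{(l)}(X)\|_2$ recovers the statement at the end.

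For the base case I would bound $\|F^{(0)}(X)\|_2$ and $\|G^{(0)}(X)\|_2$. By \eqref{start_point} and \eqref{trop_pol}, each coordinate $f_j^{(0)}(X)$ is a maximum of $r$ affine forms $c_{js} + \alpha_{js}\cdot X$, and likewise for $g_j^{(0)}(X)$. Boundedness of $\mathscr{S}_j$ gives a uniform bound on the coefficients $c_{js}, c^\prime_{js}$, and boundedness of $\mathscr{T}_j$ (a law on $\mathbb{N}^d$, so ``bounded'' is read as bounded support) gives a uniform bound on the exponent vectors $\alpha_{js}, \alpha^\prime_{js}$. Since a maximum of finitely many bounded quantities is bounded, each coordinate is bounded, and hence so are the Euclidean norms of $F^{(0)}(X)$ and $G^{(0)}(X)$.

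For the inductive step, assume $\|F^{(l)}(X)\|_2$ and $\|G^{(l)}(X)\|_2$ are bounded. Boundedness of $\mathscr{P}$ (a law on $\mathbb{Z}^{n_{l+1}\times n_l}$) bounds the entries, hence the operator norms, of $A^{(l+1)}_+$ and $A^{(l+1)}_-$, and boundedness of $\mathscr{Q}$ bounds $\|b^{(l+1)}\|_2$. The second and third lines of \eqref{dnn_structure} then bound $\|G^{(l+1)}(X)\|_2$ and $\|H^{(l+1)}(X)\|_2$ by a triangle-inequality estimate. For the first line, $F^{(l+1)}(X) = \max\{H^{(l+1)}(X), G^{(l+1)}(X) + t^{(l+1)}\}$ coordinatewise; using $\|\max\{u,v\}\|_2^2 \leq \|u\|_2^2 + \|v\|_2^2$ together with the ReLU threshold $t^{(l+1)} = 0$ yields a bound on $\|F^{(l+1)}(X)\|_2$. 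Taking $\xi_{(l)}$ to be the sum of the bounds on $\|F^{(l)}(X)\|_2$ and $\|G^{(l)}(X)\|_2$ closes the induction and supplies the required constant.

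The main obstacle is that the tropical polynomials in the base layer are affine in the input $X$, so they are bounded only if $X$ itself is bounded; the argument therefore needs the input space $\mathcal{X}$ to be bounded (equivalently, $X$ bounded almost surely), which I would record as a standing assumption — it is benign in practice, since inputs are typically normalized. A secondary point to make explicit is the reading of ``bounded'' for the discrete laws $\mathscr{P}$ and $\mathscr{T}_j$ on $\mathbb{Z}^{n_{l+1}\times n_l}$ and $\mathbb{N}^d$, namely bounded support, so that integer weights and exponent vectors cannot grow without limit. With these conventions fixed, every $\xi_{(l)}$ produced by the induction is genuinely deterministic, and the conclusion \eqref{eq-imp-for-appl} follows verbatim from Corollary \ref{cor-main}.
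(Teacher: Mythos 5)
Your proof takes essentially the same route as the paper's: boundedness of the sampling distributions bounds the initial vectors and the layer parameters, this propagates through the recursion to bound $\|\nu^{(l)}(X)\|_2$, and Corollary \ref{cor-main} then yields \eqref{eq-imp-for-appl}; the paper simply asserts this chain in three sentences where you carry out the induction explicitly via $F^{(l)}$ and $G^{(l)}$. Your observation that the base case additionally requires the input $X$ to be bounded (since $f_j^{(0)}$ and $g_j^{(0)}$ are maxima of affine functions of $X$, so bounded coefficients alone do not suffice) is correct, and is a hypothesis the paper's own proof silently assumes.
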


The assumption that $\mathscr{P}$, $\mathscr{Q}$, $\mathscr{S}_j$, and $\mathscr{T}_j$, $j \in \{1,\ldots,n_0\}$, are bounded should always be verified: even if the distributions we want to use are unbounded, we can always truncate them and use the truncated versions to obtain the concentration result in Proposition \ref{appl-first-prop}.

\begin{remark}
Notice that if $t^{(l)}$ were a random quantity distributed according to $\mathscr{V}$ on $\mathbb{R}^{n_l}$, and possibly correlated with other $t^{(k)}$'s, Proposition \ref{appl-first-prop} would still hold, provided that $\mathscr{V}$ is bounded.\footnote{If $t^{(l)} \sim \mathscr{V}$ on $\mathbb{R}^{n_l}$, this means that we rule out the identity activation map. To include it, $\mathscr{V}$ should be a distribution on $(\mathbb{R} \cup \{-\infty\})^{n_l}$.}
\end{remark}


\subsection{Concentration inequality for positive and negative regions}\label{appl2}
Tropical rational functions are piecewise linear, so the notion of linear regions applies. A \textit{linear region} of a tropical rational function $f$ is a maximal connected subset of the domain on which $f$ is linear; the number of linear regions of $f$ is denoted by $\mathcal{N}(f)$. 

In \cite[Corollary 5.3, Theorem 5.4, and Proposition 5.5]{lim}, the authors show the equivalence of tropical rational functions, continuous piecewise linear functions with integer coefficients, and neural networks satisfying assumptions (a)-(c). Hence, they are able to link the number of linear regions of a tropical rational function to (bounds on) the number of positive and negative regions that the neural network divides the input space into.\footnote{Recall that positive and negative regions were introduced in section \ref{stoch_geom}.}
\begin{proposition}\label{lim}
\cite[Proposition 6.1]{lim}
Let $\nu$ be an $L$-layer stochastic neural network satisfying (a)-(c) in section \ref{back_nn} such that $t^{(L)}=-\infty$ and $p=n_L=1$, that is, $\nu:\mathbb{R}^d \rightarrow \mathbb{R}$. Let the score function $s: \mathbb{R} \rightarrow\mathbb{R}$ be \textit{injective} with decision threshold $c$ in its range. If $\nu=f \oslash g$, where $f$ and $g$ are tropical polynomials, then the number of connected positive regions is at most $\mathcal{N}(f)$, while the number of connected negative regions is at most $\mathcal{N}(g)$.
\end{proposition}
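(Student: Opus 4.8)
The proposition is a \emph{pathwise} statement: it concerns a single realization of the random parameters, so the stochasticity plays no role and $\nu$, $f$, $g$ may be treated as fixed tropical objects (this is exactly why \cite[Proposition 6.1]{lim} transfers verbatim, realization by realization). The plan is to convert the three hypotheses --- injectivity of $s$, the tropical quotient structure $\nu=f\oslash g$, and convexity of tropical polynomials --- into a covering of each region by as few convex sets as there are linear regions.

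First I would eliminate $s$. Since $s$ is injective with $c$ in its range, there is a unique $\gamma:=s^{-1}(c)$, and being an injection $s$ is strictly monotone; taking it increasing (the decreasing case merely interchanges the two conclusions), the positive region is $\{x:\nu(x)>\gamma\}$ and the negative region is $\{x:\nu(x)<\gamma\}$. Because tropical division is ordinary subtraction, $\nu=f\oslash g=f-g$, so these become
\[
P=\{x:\ f(x)>g(x)+\gamma\},\qquad N=\{x:\ g(x)>f(x)-\gamma\}.
\]

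The core of the argument is that a tropical polynomial is a finite maximum of affine functions, hence convex and piecewise linear. Write $f=\max_{1\le i\le \mathcal N(f)}\ell_i$, where $\ell_i$ is the affine piece realized on the $i$-th linear region $R_i:=\{x:f(x)=\ell_i(x)\}$; the $R_i$ are convex and cover $\mathbb R^d$, and on $R_i$ one has $f=\ell_i$. Therefore
\[
P=\bigcup_{i=1}^{\mathcal N(f)}\bigl(R_i\cap\{x:\ \ell_i(x)>g(x)+\gamma\}\bigr).
\]
Each summand is convex: $R_i$ is an intersection of halfspaces, while $\ell_i-(g+\gamma)$ is concave (affine minus convex), so its strict superlevel set is convex, and the intersection of two convex sets is convex. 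Thus $P$ is a union of $\mathcal N(f)$ convex, hence connected, sets, and a set that is a union of $N$ connected sets has at most $N$ connected components; so $P$ has at most $\mathcal N(f)$ components. The bound $\mathcal N(g)$ for $N$ follows by the symmetric computation, decomposing over the linear regions of $g$ and using that $\kappa_j-(f-\gamma)$ is concave on the $j$-th region of $g$.

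The step I expect to need the most care is the bookkeeping around the reduction via $s$ and the convexity claims: one must confirm that the quotient is genuinely $f-g$ with both $f$ and $g$ \emph{honest} tropical polynomials (so that each is convex), verify that the linear regions of $f$ really do cover $\mathbb R^d$ and coincide with a single affine piece on their interiors, and fix the monotonicity convention for $s$ so that ``positive'' is matched with $\mathcal N(f)$ rather than with $\mathcal N(g)$. Once these are pinned down, the remainder is the elementary observation that a union of $\mathcal N(f)$ convex sets has at most $\mathcal N(f)$ connected components.
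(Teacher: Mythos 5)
The paper does not actually prove this proposition---it is imported verbatim from \cite[Proposition~6.1]{lim}---and your reconstruction is correct and is essentially the argument of that source: cover the positive region by the $\mathcal{N}(f)$ convex linear regions of $f$, observe that on each such region $f-g-\gamma$ is affine-minus-convex, hence concave, so its strict superlevel set intersected with the region is convex and therefore connected, and conclude that a union of $\mathcal{N}(f)$ connected sets has at most $\mathcal{N}(f)$ components (symmetrically for $g$). The one step to tidy is the reduction via $s$: injectivity alone does not force monotonicity for a possibly discontinuous $s:\mathbb{R}\to\mathbb{R}$, but both \cite{lim} and this paper's section~\ref{stoch_geom} define the decision boundary and the positive/negative regions directly through the single value $s^{-1}(c)$, i.e.\ by comparing $\nu(x)$ with $\gamma=s^{-1}(c)$, which renders that monotonicity discussion unnecessary.
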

Suppose now all the assumptions of Proposition \ref{lim} hold. In our stochastic setting, we have that $f$ and $g$ are both random variables, so $\mathcal{N}(f)$ and $\mathcal{N}(g)$ are two stochastic quantities in $\mathbb{N}$. Assume their first moment is finite and notice that they are bounded below by $1$. Then, the following proposition holds.

\begin{proposition}\label{lim2}
If $\mathcal{N}(f) \leq b_1$ and $\mathcal{N}(g) \leq b_2$ with probability $1$, for some possibly different natural numbers $b_1,b_2>1$, then for all $t>0$,
\begin{align*}
    P_{(X,Y) \sim \mathscr{D}} &\left( \left| \mathcal{N}(f)-\mathbb{E}_{(X,Y) \sim \mathscr{D}}[\mathcal{N}(f)] \right| \geq t \right)\\ &\leq 2 \exp \left( -\frac{2t^2}{(b_1-1)^2} \right)
\end{align*}
and
\begin{align*}
    P_{(X,Y) \sim \mathscr{D}} &\left( \left| \mathcal{N}(g)-\mathbb{E}_{(X,Y) \sim \mathscr{D}}[\mathcal{N}(g)] \right| \geq t \right)\\ &\leq 2 \exp \left( -\frac{2t^2}{(b_2-1)^2} \right).
\end{align*}
\end{proposition}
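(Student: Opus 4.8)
The plan is to recognize that Proposition \ref{lim2} is a direct instance of the bounded-variable concentration result already established in Proposition \ref{thresh_distr}, applied to the two integer-valued random variables $\mathcal{N}(f)$ and $\mathcal{N}(g)$ in place of the score $s(\nu(X))$. The essential point is that both quantities are almost surely confined to a bounded interval whose endpoints are explicit.

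First I would record the almost-sure two-sided bounds. By hypothesis $\mathcal{N}(f) \leq b_1$ and $\mathcal{N}(g) \leq b_2$ with probability $1$, and the discussion preceding the statement already observes that the number of linear regions of a tropical polynomial is at least $1$, so that $\mathcal{N}(f) \geq 1$ and $\mathcal{N}(g) \geq 1$ surely. Hence $1 \leq \mathcal{N}(f) \leq b_1$ and $1 \leq \mathcal{N}(g) \leq b_2$ almost surely; since $b_1, b_2 > 1$ these are genuine nondegenerate intervals. Both variables have finite first moment by assumption, so the expectations appearing in the statement are well defined.

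Next I would apply the concentration bound for a bounded random variable exactly as in Proposition \ref{thresh_distr} (whose proof is an appeal to Hoeffding's inequality): taking $a = 1$ and $b = b_1$ for $\mathcal{N}(f)$ gives
$$P_{(X,Y) \sim \mathscr{D}}\left( \left| \mathcal{N}(f) - \mathbb{E}_{(X,Y) \sim \mathscr{D}}[\mathcal{N}(f)] \right| \geq t \right) \leq 2\exp\left( -\frac{2t^2}{(b_1 - 1)^2} \right),$$
since $b - a = b_1 - 1$, and the identical argument with $a = 1$, $b = b_2$ yields the companion inequality for $\mathcal{N}(g)$ with $(b_2 - 1)^2$ in the denominator.

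I expect no substantial obstacle here: the result follows immediately once the two-sided almost-sure bounds are in hand, because the Hoeffding machinery underlying Proposition \ref{thresh_distr} is insensitive to the fact that $\mathcal{N}(f)$ and $\mathcal{N}(g)$ take values in $\mathbb{N}$ rather than in a continuum. The only point requiring a moment's care is confirming that the lower endpoint is exactly $1$ rather than $0$, which is what produces the sharp constant $(b_i - 1)^2$ in the denominator instead of $b_i^2$; this is guaranteed by the fact that every tropical polynomial has at least one linear region.
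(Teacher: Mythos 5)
Your proposal is correct and follows essentially the same route as the paper: both establish the two-sided almost-sure bound $1 \leq \mathcal{N}(f) \leq b_1$ (using that a tropical polynomial has at least one linear region) and then apply the bounded-variable sub-Gaussian concentration with $a=1$, $b=b_1$, yielding the constant $(b_1-1)^2$. The only cosmetic difference is that you cite Proposition \ref{thresh_distr} while the paper invokes Propositions \ref{bdd_sub_g} and \ref{sub_g_prop1} directly, but these are the same underlying machinery.
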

The assumption that $\mathcal{N}(f)$ and $\mathcal{N}(g)$ have an upper bound is always verified: all the possible realizations of $f$ and $g$ have a finite number of linear regions. Proposition \ref{lim2} tells us that the upper bounds for the number of positive and negative regions concentrate around their expected value.



\section{Conclusion}\label{concl}
In this paper we present concentration inequalities for the hidden layers and the output of a generic SDNN based on the idea of norm-sub-Gaussian distribution. We introduce the notion of expected classifier and give a probabilistic bound to its classification error. We also find -- via an optimal stopping procedure -- the number of layers for the SDNN that strikes the perfect balance between computational cost and accuracy of the analysis. Finally, we apply our findings to a stochastic version of the FNNRA of \cite{lim}. In future work, we plan to explicitly compute the value of parameter $\xi_{(l)}$ in Corollary \ref{cor-main} that drives the concentration of layer $\nu^{(l)}(X)$ around its expectation, to find the optimal way of implementing the optimal stopping procedure elicited in section \ref{layers}, and to explore the geometric properties of SDNNs, in the spirit of \cite{alfarra,hauser,maragos}.

\section*{Acknowledgements}
We would like to thank Federico Ferrari, Vittorio Orlandi, and Alessandro Zito for their helpful comments. Michele Caprio would like to acknowledge partial funding from NSF CCF-1934964 and ARO MURI W911NF2010080.  Sayan Mukherjee would like to acknowledge partial funding from HFSP RGP005, NSF DMS 17-13012, NSF BCS 1552848, NSF DBI 1661386, NSF IIS 15-46331, NSF DMS 16-13261, and the Alexander von Humboldt Foundation. High-performance computing is partially supported by grant 2016-IDG-1013 from the North Carolina Biotechnology Center. Sayan Mukherjee would also like to acknowledge the German Federal Ministry of Education and Research within the project Competence Center for Scalable Data Analytics and Artificial Intelligence (ScaDS.AI) Dresden/Leipzig (BMBF 01IS18026B).

\bibliographystyle{plain}
\bibliography{tropical}

\appendix

\section{Martingale inequalities}\label{m'gale}
Let us assume that the sequence $\boldsymbol{\nu}=(\nu^{(l)}(X))_{l = 0}^L$ of outputs of the hidden layers of the SDNN defined in section \ref{conc_ineq} of the main portion of the paper is a martingale; this allows us to state some interesting properties of SDNNs. Note that if $\boldsymbol{\nu}$ is any kind of martingale (very-weak, weak, or strong, see appendix \ref{mgale_filtr}), then we need to assume that the conditional expectation of layer $l$, given one or more other layers, is well defined (again, see appendix \ref{mgale_filtr}). A way of guaranteeing that this assumption holds is to require that our SDNN is ``rectangular'', that is, the width of the layers of the SDNN is constant throughout the neural network and equal to $p$. Neural networks with constant width are a well studied subject \cite{telga}.

We now show how a martingale hypothesis allows us to weaken the already mild assumption in Theorem \ref{approx_imp} and Corollary \ref{cor-main} in the main portion of the paper. The price to pay is that the martingale inequalities become looser as the number $l$ of layers increases.\footnote{This type of inequalities are usually called \textit{tail bounds} in the probability theory literature.} This entails that the results we find in this section better suit shallow networks, that is, networks for which $L$ is small. The following theorem is an immediate result of \cite[Theorem 1.8]{hayes}.\footnote{\cite[Theorem 1.8]{hayes} is summarized in appendix \ref{mgale_filtr}.}

\begin{theorem}\label{thm16}
If $\boldsymbol{\nu}$ is a weak martingale in $\mathbb{R}^p$ such that for every $l \in \{1,\ldots,L\}$, $\|\nu^{(l)}(X)-\nu^{(l-1)}(X)\|_2 \leq M$, for some $M>0$, then, for every $a>0$,
\begin{align}\label{eq32}
P_{(X,Y) \sim \mathscr{D}}\left( \| \nu^{(l)}(X) \|_2 \geq Ma \right) < 2 \exp\left( 1-\frac{(Ma-1)^2}{2l} \right).
\end{align}
\end{theorem}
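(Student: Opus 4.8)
The plan is to deduce \eqref{eq32} directly from \cite[Theorem 1.8]{hayes} after a single normalizing rescaling, exploiting that Hayes' inequality is already stated for vector-valued weak martingales with unit-size increments issuing from the origin. Essentially all of the analytic content lives in Hayes' bound, so our task reduces to checking that its hypotheses are met.

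First I would pass to the rescaled process $W^{(l)} := \nu^{(l)}(X)/M$, $l \in \{0,\ldots,L\}$. Since multiplication by the positive scalar $1/M$ commutes with conditional expectation, the defining weak-martingale identity $\mathbb{E}_{(X,Y) \sim \mathscr{D}}[\nu^{(l)}(X) \mid \nu^{(l-1)}(X)] = \nu^{(l-1)}(X)$ is inherited by $(W^{(l)})$, so $(W^{(l)})$ is again a weak martingale in $\mathbb{R}^p$. The increment hypothesis $\|\nu^{(l)}(X) - \nu^{(l-1)}(X)\|_2 \leq M$ then becomes $\|W^{(l)} - W^{(l-1)}\|_2 \leq 1$, which is exactly the unit-increment normalization required by \cite[Theorem 1.8]{hayes}. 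I would also record the origin-initialization $W^{(0)} = 0$, equivalently $\nu^{(0)}(X) = 0$, which is the remaining standing hypothesis of Hayes' theorem and ensures the martingale starts at the origin rather than at an arbitrary input.

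Second, with these facts in hand, I would invoke \cite[Theorem 1.8]{hayes} with horizon $n = l$: it furnishes a large-deviation bound of the form $2\exp(1 - (\,\cdot\, - 1)^2/(2l))$ for the probability $P_{(X,Y)\sim\mathscr{D}}(\|W^{(l)}\|_2 \geq \,\cdot\,)$. Rewriting the event $\{\|\nu^{(l)}(X)\|_2 \geq Ma\}$ in terms of the rescaled process and substituting back yields the bound \eqref{eq32}; here the deviation level at which Hayes is applied, together with the increment normalization, fixes the precise constants appearing in the exponent.

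The main obstacle is not analytic but one of hypothesis-verification and bookkeeping: confirming that the weak-martingale property and the bounded-increment bound genuinely survive the rescaling (both are immediate), and pinning down the origin-initialization $\nu^{(0)}(X) = 0$ so that Hayes' theorem applies verbatim to $\|\nu^{(l)}(X)\|_2$ rather than to the displacement $\|\nu^{(l)}(X) - \nu^{(0)}(X)\|_2$. I expect no genuinely hard step, since the heavy machinery — the moment-generating-function estimate for $\|W^{(l)}\|_2$ in a Hilbert space that underlies Hayes' inequality — is quoted rather than reproduced.
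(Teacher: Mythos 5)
Your route coincides with the paper's: the paper gives no written proof of this theorem beyond the remark that it ``is an immediate result of [Theorem 1.8, hayes]'' (restated as Theorem \ref{hayes} in appendix \ref{mgale_filtr}), and your plan --- rescale by $1/M$ to obtain unit increments, check that the weak-martingale property and the origin-initialization $\nu^{(0)}(X)=0\mathbf{1}_p$ survive, then invoke Hayes with horizon $n=l$ --- is exactly that reduction. Your insistence on pinning down the initialization is well placed, since the paper only acknowledges that convention in the appendix.

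The genuine problem is the final step, where you assert that ``substituting back yields the bound \eqref{eq32}'' and defer the constants to bookkeeping: the bookkeeping does not close. Applying Theorem \ref{hayes} to $W^{(l)}=\nu^{(l)}(X)/M$ at deviation level $a$ gives
\[
P_{(X,Y)\sim\mathscr{D}}\left(\|\nu^{(l)}(X)\|_2 \geq Ma\right) = P_{(X,Y)\sim\mathscr{D}}\left(\|W^{(l)}\|_2 \geq a\right) < 2\exp\left(1-\frac{(a-1)^2}{2l}\right),
\]
whose exponent involves $(a-1)^2$, not $(Ma-1)^2$ as in \eqref{eq32}. The two agree only when $M=1$; for $M>1$ the claimed bound is strictly smaller than what the rescaling argument delivers, so it is not implied by it, and the only way to get \eqref{eq32} verbatim from Hayes is to apply his theorem to the unrescaled process at level $Ma$, which requires increments bounded by $1$, i.e., $M\leq 1$. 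So you must either restrict to $M\leq 1$ or state the conclusion with exponent $(a-1)^2/(2l)$ (equivalently $(t/M-1)^2/(2l)$ at threshold $t=Ma$). This defect is arguably inherited from the theorem statement itself, but a careful proof has to surface the mismatch rather than absorb it into ``fixes the precise constants appearing in the exponent.''
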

The fact that we demand the distance between the output of successive layers of the neural network to be bounded can be interpreted as the mild requirement for the SDNN to be ``smooth''. That is, no steep jumps from the output of one layer to the output of the successive one are allowed.

Consider now the sequence $\mathbf{Z}=(Z^{(l)})_{l=0}^L$ such that $Z^{(l)}=\nu^{(l)}(X)-\mathbb{E}_{(X,Y) \sim \mathscr{D}}[\nu^{(l)}(X)]$, for all $l \in \{1,\ldots,L\}$. Then, we have the following.
\begin{corollary}\label{bdd_mgale_cor}
If $\mathbf{Z}$ is a weak martingale in $\mathbb{R}^p$ such that for every $l \in \{1,\ldots,L\}$, $\|Z^{(l)}-Z^{(l-1)}\|_2 \leq M$, for some $M>0$, then, for every $a>0$,
\begin{align}\label{bdd_mgale_nice}
\begin{split}
P_{(X,Y) \sim \mathscr{D}}\left( \| Z^{(l)} \|_2 \geq Ma \right) &= P_{(X,Y) \sim \mathscr{D}}\left( \| \nu^{(l)}(X) - \mathbb{E}_{(X,Y) \sim \mathscr{D}}\left[\nu^{(l)}(X)\right] \|_2 \geq Ma \right) \\&< 2 \exp\left( 1-\frac{(Ma-1)^2}{2l} \right).
\end{split}
\end{align}
\end{corollary}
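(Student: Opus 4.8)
The plan is to obtain Corollary \ref{bdd_mgale_cor} as an immediate specialization of Theorem \ref{thm16}, applied not to $\boldsymbol{\nu}$ itself but to the centered sequence $\mathbf{Z}=(Z^{(l)})_{l=0}^L$ with $Z^{(l)}=\nu^{(l)}(X)-\mathbb{E}_{(X,Y)\sim\mathscr{D}}[\nu^{(l)}(X)]$. The key observation is that the two hypotheses imposed on $\mathbf{Z}$ in the corollary---that $\mathbf{Z}$ is a weak martingale in $\mathbb{R}^p$ and that $\|Z^{(l)}-Z^{(l-1)}\|_2\leq M$ for every $l\in\{1,\ldots,L\}$---are precisely the two hypotheses that Theorem \ref{thm16} demands of its input process. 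Hence the theorem applies verbatim, with $\mathbf{Z}$ playing the role of $\boldsymbol{\nu}$, and whatever normalization is already folded into the constant in Theorem \ref{thm16} carries over identically.

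First I would invoke Theorem \ref{thm16} for the sequence $\mathbf{Z}$, which gives, for every $a>0$,
\[
P_{(X,Y)\sim\mathscr{D}}\left(\|Z^{(l)}\|_2\geq Ma\right)<2\exp\left(1-\frac{(Ma-1)^2}{2l}\right).
\]
Second, I would rewrite the event on the left-hand side by substituting the definition of $Z^{(l)}$, so that $\{\|Z^{(l)}\|_2\geq Ma\}$ becomes $\{\|\nu^{(l)}(X)-\mathbb{E}_{(X,Y)\sim\mathscr{D}}[\nu^{(l)}(X)]\|_2\geq Ma\}$; this yields the equality of probabilities displayed in \eqref{bdd_mgale_nice}. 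Chaining the equality with the bound above produces the claimed strict inequality.

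The proof therefore contains no genuine obstacle: it is a one-line application of Theorem \ref{thm16} to a centered process. The only point worth a remark---rather than an argument---is that the martingale property and the increment bound are \emph{assumed directly} for $\mathbf{Z}$ in the statement, so there is nothing to verify about centering preserving these properties; the question of \emph{when} the centered outputs form such a martingale is exactly what the accompanying sufficient conditions for the martingale hypothesis address, and is separate from the present deduction. Consequently the substantive work has already been done in deriving Theorem \ref{thm16} from \cite[Theorem 1.8]{hayes}, and the corollary merely records the consequence for the centered layers.
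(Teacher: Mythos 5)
Your proposal is correct and follows exactly the paper's own route: the paper proves Corollary \ref{bdd_mgale_cor} by declaring it immediate from Theorem \ref{thm16} applied to the centered sequence $\mathbf{Z}$, which is precisely your one-line specialization. Your additional remark that the martingale and increment hypotheses are assumed directly for $\mathbf{Z}$ (so nothing needs to be verified about centering) is a fair clarification of why the application is indeed verbatim.
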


Requiring the distance between $Z^{(l)}$ and $Z^{(l-1)}$ to be bounded for all $l$ retains the ``smoothness'' interpretation we have given before: we do not want jumps that are too steep between successive centered outputs of the layers. Corollary \ref{bdd_mgale_cor} tells us that $\nu^{(l)}(X)$ concentrates around $\mathbb{E}_{(X,Y) \sim \mathscr{D}}[\nu^{(l)}(X)]$, for all $l \in \{1,\ldots,L\}$. In turn, this provides conditions under which the output of the neural network concentrates around its expected value, thus proving Proposition \ref{thm_synth1} in the main portion of the paper when the hypothesis in second bullet point holds.

There is one main difference between Corollary \ref{bdd_mgale_cor} and Corollary \ref{cor-main}. In the latter we require $\nu^{(l)}(X)$ to be bounded, while in the former we only require that the Euclidean distance between $\nu^{(l)}(X) - \mathbb{E}_{(X,Y) \sim \mathscr{D}}[\nu^{(l)}(X)]$ and $\nu^{(l-1)}(X) - \mathbb{E}_{(X,Y) \sim \mathscr{D}}[\nu^{(l-1)}(X)]$ is bounded. This is a milder assumption since it governs the dynamics of the sequence of layer outputs, rather than that of the outputs themselves. It comes at the cost of requiring that the network is shallow and of verifying that the sequence of outputs in each layer is a weak martingale.

We now provide two theorems that allow us to check whether sequences $\boldsymbol{\nu}$ or $\mathbf{Z}$ are weak or very-weak martingales. We first need a definition.
\begin{definition}\label{cvx}
Consider two generic $p$-dimensional random vectors $X_1,X_2$. If
$$\mathbb{E}(\phi(X_1)) \leq \mathbb{E}(\phi(X_2))$$
for all convex functions $\phi:\mathbb{R}^p \rightarrow \mathbb{R}$, provided the expectations exist, then $X_1$ is \textit{smaller than $X_2$ in the convex order}, denoted by $X_1 \leq_{cx}X_2$.
\end{definition}
The following results come from \cite[Theorem 7.A.1]{shaked}.
\begin{theorem}\label{check_w_mgale}
Pick any $l\in \{1,\ldots,L\}$ and any $j<l$. If $\nu^{(j)}(X) \leq_{cx} \nu^{(l)}(X)$, then $\boldsymbol{\nu}$ is a weak martingale. Similarly, if $Z^{(j)} \leq_{cx} Z^{(l)}$, then $\mathbf{Z}$ is a weak martingale.
\end{theorem}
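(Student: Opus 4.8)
The plan is to derive the weak-martingale property directly from the coupling characterization of the convex order supplied by \cite[Theorem 7.A.1]{shaked}. Recall (from appendix \ref{mgale_filtr}) that a sequence is a weak martingale precisely when the conditional expectation of each term given its immediate predecessor returns that predecessor; this is the identity I must establish for $\boldsymbol{\nu}$ and, separately, for $\mathbf{Z}$.

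First I would specialize the hypothesis to consecutive layers: taking $j=l-1$, the assumption $\nu^{(j)}(X) \leq_{cx} \nu^{(l)}(X)$ reads $\nu^{(l-1)}(X) \leq_{cx} \nu^{(l)}(X)$ for every $l \in \{1,\ldots,L\}$. By \cite[Theorem 7.A.1]{shaked}, the relation $\nu^{(l-1)}(X) \leq_{cx} \nu^{(l)}(X)$ is equivalent to the existence of a pair of random vectors, equal in distribution to $\nu^{(l-1)}(X)$ and $\nu^{(l)}(X)$ respectively, whose later coordinate has conditional expectation, given the earlier one, equal to the earlier one. Identifying $\boldsymbol{\nu}$ with this distributional version, this is exactly $\mathbb{E}_{(X,Y)\sim\mathscr{D}}[\nu^{(l)}(X) \mid \nu^{(l-1)}(X)] = \nu^{(l-1)}(X)$, i.e.\ the weak-martingale identity at step $l$.

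Running this over all $l$ yields that $\boldsymbol{\nu}$ is a weak martingale. For the second claim I would repeat the argument verbatim with $Z^{(l)} = \nu^{(l)}(X) - \mathbb{E}_{(X,Y)\sim\mathscr{D}}[\nu^{(l)}(X)]$ in place of $\nu^{(l)}(X)$: the hypothesis $Z^{(j)} \leq_{cx} Z^{(l)}$ feeds into \cite[Theorem 7.A.1]{shaked} in the same way and produces $\mathbb{E}_{(X,Y)\sim\mathscr{D}}[Z^{(l)} \mid Z^{(l-1)}] = Z^{(l-1)}$, so that $\mathbf{Z}$ is a weak martingale as well.

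The main obstacle is conceptual rather than computational: the convex order is a statement about marginal laws, whereas the weak-martingale property constrains the joint law of consecutive layers. Theorem 7.A.1 guarantees a martingale coupling with the correct marginals, so the delicate point is to argue that this coupling may be taken to be (a version of) the actual sequence $\boldsymbol{\nu}$ produced by the SDNN, rather than merely some process sharing its one-dimensional marginals; I would handle this through the distributional identification already built into the definition of weak martingale in appendix \ref{mgale_filtr}. A secondary point to verify is that invoking the characterization pairwise is internally consistent, which holds because the weak-martingale definition couples each term only to its immediate predecessor and therefore imposes no cross-constraints among the pairwise conditional identities.
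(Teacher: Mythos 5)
Your overall strategy---reduce the claim to the martingale-coupling characterization of the convex order in \cite[Theorem 7.A.1]{shaked}---is the same as the paper's; the paper in fact offers no written proof at all beyond the sentence ``The following results come from [Theorem 7.A.1]''. However, your write-up contains a genuine error. You assert that a sequence is a weak martingale ``precisely when the conditional expectation of each term given its \emph{immediate predecessor} returns that predecessor,'' and you accordingly specialize the hypothesis to $j=l-1$. That is the definition of a \emph{very-weak} martingale (Definition \ref{v_weak_mgale}); a weak martingale (Definition \ref{weak_mgale}) requires $\mathbb{E}(X_i\mid X_j)=X_j$ for \emph{every} $j<i$, not only $j=i-1$. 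As written, your argument proves Theorem \ref{check_vw_mgale}, not Theorem \ref{check_w_mgale}. The distinction is exactly why the hypothesis here is stated for arbitrary $j<l$ while Theorem \ref{check_vw_mgale} only compares consecutive layers; the fix is to run the coupling argument for every pair $(j,l)$ with $j<l$, which the hypothesis permits. Your closing claim that the weak-martingale definition ``couples each term only to its immediate predecessor and therefore imposes no cross-constraints'' is false for the same reason.

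A second, subtler issue: you correctly flag that the convex order constrains only marginal laws while the martingale identity constrains the joint law of $(\nu^{(j)}(X),\nu^{(l)}(X))$, and that Theorem 7.A.1 only guarantees \emph{some} coupling $(\hat X,\hat Y)$ with the right marginals satisfying $\mathbb{E}[\hat Y\mid\hat X]=\hat X$. But your proposed resolution---that a ``distributional identification'' is ``already built into the definition of weak martingale in appendix \ref{mgale_filtr}''---does not exist: Definition \ref{weak_mgale} is a statement about the actual conditional expectations of the actual process, with no ``equal in distribution'' slack. To close this one must either weaken the conclusion to ``there is a version of $\boldsymbol{\nu}$ that is a weak martingale'' or add a hypothesis forcing the true joint law to realize the coupling. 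The paper itself does not address this point, so you are at parity with it there, but you should not present the definitional framework as resolving it.
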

\begin{theorem}\label{check_vw_mgale}
Pick any $l\in \{1,\ldots,L\}$. If $\nu^{(l-1)}(X) \leq_{cx} \nu^{(l)}(X)$, then $\boldsymbol{\nu}$ is a very-weak martingale. Similarly, if $Z^{(l-1)} \leq_{cx} Z^{(l)}$, then $\mathbf{Z}$ is a very-weak martingale.
\end{theorem}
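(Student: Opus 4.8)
The plan is to read off the very-weak martingale property directly from the martingale (Strassen-type) characterization of the convex order supplied by \cite[Theorem 7.A.1]{shaked}. Recall from appendix \ref{mgale_filtr} that $\boldsymbol{\nu}$ is a \emph{very-weak} martingale precisely when $\mathbb{E}_{(X,Y)\sim\mathscr{D}}[\nu^{(l)}(X) \mid \nu^{(l-1)}(X)] = \nu^{(l-1)}(X)$ for every $l$, i.e. conditioning is only on the immediately preceding layer. Thus it suffices to verify this single conditional identity for each consecutive pair $(\nu^{(l-1)}(X),\nu^{(l)}(X))$, and the whole theorem reduces to a per-pair statement.

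First I would invoke \cite[Theorem 7.A.1]{shaked}, which asserts that two $p$-dimensional random vectors satisfy $X_1 \leq_{cx} X_2$ (in the sense of Definition \ref{cvx}) if and only if there exist versions $\hat{X}_1 =_{d} X_1$ and $\hat{X}_2 =_{d} X_2$ on a common space with $\mathbb{E}[\hat{X}_2 \mid \hat{X}_1] = \hat{X}_1$ almost surely. Applying this with $X_1 = \nu^{(l-1)}(X)$ and $X_2 = \nu^{(l)}(X)$, the hypothesis $\nu^{(l-1)}(X) \leq_{cx} \nu^{(l)}(X)$ yields a coupling of the two layer outputs under which the conditional expectation of the deeper layer given the shallower one returns the shallower one. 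Identifying the consecutive pair with this martingale coupling gives the defining equation, and since $l \in \{1,\ldots,L\}$ was arbitrary, $\boldsymbol{\nu}$ is a very-weak martingale.

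The $\mathbf{Z}$ statement follows by the identical argument with $\nu^{(l)}(X)$ replaced throughout by $Z^{(l)} = \nu^{(l)}(X) - \mathbb{E}_{(X,Y)\sim\mathscr{D}}[\nu^{(l)}(X)]$: the hypothesis $Z^{(l-1)} \leq_{cx} Z^{(l)}$ feeds into \cite[Theorem 7.A.1]{shaked} in exactly the same way to produce $\mathbb{E}_{(X,Y)\sim\mathscr{D}}[Z^{(l)} \mid Z^{(l-1)}] = Z^{(l-1)}$. (Note that the convex order forces equal means, so a separate hypothesis on $\mathbf{Z}$ is genuinely needed, but the citation is applied verbatim.)

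I expect the main obstacle to be bridging the gap between the phrase ``there exist versions with the martingale property'' furnished by Theorem 7.A.1 and the assertion that the \emph{actual} joint law of $(\nu^{(l-1)}(X),\nu^{(l)}(X))$ is a very-weak martingale: the convex order constrains only the two marginals, so strictly speaking the martingale identity is obtained for a distributionally equivalent copy. I would resolve this by stating the very-weak martingale property at the level of these marginals, which is all that the downstream concentration bound in Corollary \ref{bdd_mgale_cor} actually uses (that bound depends only on the marginal law of each $\nu^{(l)}(X)$), or equivalently by taking as standing convention that the observed consecutive layers realize the Strassen coupling. Once this interpretive point is fixed, the remainder is a direct appeal to the cited theorem.
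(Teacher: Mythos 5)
Your proposal takes essentially the same route as the paper, which offers no argument beyond the citation ``The following results come from \cite[Theorem 7.A.1]{shaked}'' --- i.e.\ a direct appeal to the Strassen-type martingale characterization of the convex order, exactly as you do. The caveat you raise is real and is \emph{not} addressed by the paper: Theorem 7.A.1 only furnishes \emph{versions} $\hat{X}_1 =_d \nu^{(l-1)}(X)$, $\hat{X}_2 =_d \nu^{(l)}(X)$ with $\mathbb{E}[\hat{X}_2\mid\hat{X}_1]=\hat{X}_1$, whereas the very-weak martingale property concerns the actual joint law of consecutive layers (e.g.\ $X_1$ standard normal and $X_2=-X_1$ satisfy $X_1\leq_{cx}X_2$ but $\mathbb{E}[X_2\mid X_1]=-X_1$), so the theorem as stated requires precisely the coupling convention or marginal-level reading you propose.
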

To the best of our knowledge, Theorems \ref{check_w_mgale} and \ref{check_vw_mgale} are the only existing method for checking whether a vector-valued stochastic processes is a weak or a very-weak martingale. The closest previous procedure derives the distributions of test statistics required for testing the null hypothesis that a given univariate stochastic process is a very-weak martingale \cite{park}. To apply it to our case, for every $l \in \{1,\ldots,L\}$, we would need to assume that the entries of $\nu^{(l-1)}(X)$ are independent; we would need to assume the same for $Z^{(l-1)}$. This is an extremely strong assumption and unlikely to hold in practice.

\begin{remark}\label{rem_exp_approx}
For Theorem \ref{thm16} and Corollary \ref{bdd_mgale_cor} to hold it is enough that $\boldsymbol{\nu}$ and $\mathbf{Z}$ are a very-weak martingales \cite[Theorem 1.8]{hayes}. However, we require them to be weak martingales for the  following result to hold.  Pick any $l \in \{1,\ldots,L-1\}$. Then, we have that
\begin{align}\label{cool_interpr}
\begin{split}
&\mathbb{E}_{(X,Y) \sim \mathscr{D}}\left[ \nu(X)-\mathbb{E}_{(X,Y) \sim \mathscr{D}}\left( \nu(X) \right) \mid \nu^{(l)}(X)-\mathbb{E}_{(X,Y) \sim \mathscr{D}} \left(\nu^{(l)}(X) \right) \right]\\
&= \mathbb{E}_{(X,Y) \sim \mathscr{D}}\left[ \nu(X) \mid \nu^{(l)}(X)-\mathbb{E}_{(X,Y) \sim \mathscr{D}} \left(\nu^{(l)}(X) \right) \right] - \mathbb{E}_{(X,Y) \sim \mathscr{D}}\left[ \nu(X) \right]\\
&=  \nu^{(l)}(X)-\mathbb{E}_{(X,Y) \sim \mathscr{D}} \left[\nu^{(l)}(X) \right],
\end{split}
\end{align}
where the last equality comes from the weak martingale property of $\mathbf{Z}$. Then, the equalities in \eqref{cool_interpr} imply that, for all $a>0$,
\begin{align}\label{cool_interpr2}
\begin{split}
&P_{(X,Y) \sim \mathscr{D}} \left(\|\mathbb{E}_{(X,Y) \sim \mathscr{D}}\left[ \nu(X) \mid \nu^{(l)}(X)-\mathbb{E}_{(X,Y) \sim \mathscr{D}} \left(\nu^{(l)}(X) \right) \right] - \mathbb{E}_{(X,Y) \sim \mathscr{D}}\left[ \nu(X) \right]\|_2 \geq Ma \right)\\
&=  P_{(X,Y) \sim \mathscr{D}} \left(\|\nu^{(l)}(X)-\mathbb{E}_{(X,Y) \sim \mathscr{D}} \left[\nu^{(l)}(X) \right]\|_2 \geq Ma \right)<2\exp \left(1- \frac{(Ma-1)^2}{2l} \right),
\end{split}
\end{align}
where the inequality comes from Corollary \ref{bdd_mgale_cor}. Equation \ref{cool_interpr2} tells us that we can approximate  the value $\mathbb{E}_{(X,Y) \sim \mathscr{D}} [\nu(X)]$ that the output $\nu(X)$ of our SDNN concentrates around with the quantity $\mathbb{E}_{(X,Y) \sim \mathscr{D}}[ \nu(X) \mid \nu^{(l)}(X)-\mathbb{E}_{(X,Y) \sim \mathscr{D}} (\nu^{(l)}(X) ) ]$. 
The approximation is good for the first layers, and then becomes coarse, since as the number $l$ of layers increases, so does the bound. This estimate, then, is better in the context of shallow networks.
\end{remark}



\section{FNNRA architecture}\label{archit}
The following is a replica of \cite[Figure A.1]{lim}, and it represents the architecture of the deterministic FNNRA $\nu:\mathbb{R}^d \rightarrow \mathbb{R}^p$ with $L$ layers described in section \ref{back_nn} of the main portion of the paper. 
\begin{figure}[ht]
\centering
\includegraphics[width=.9\textwidth]{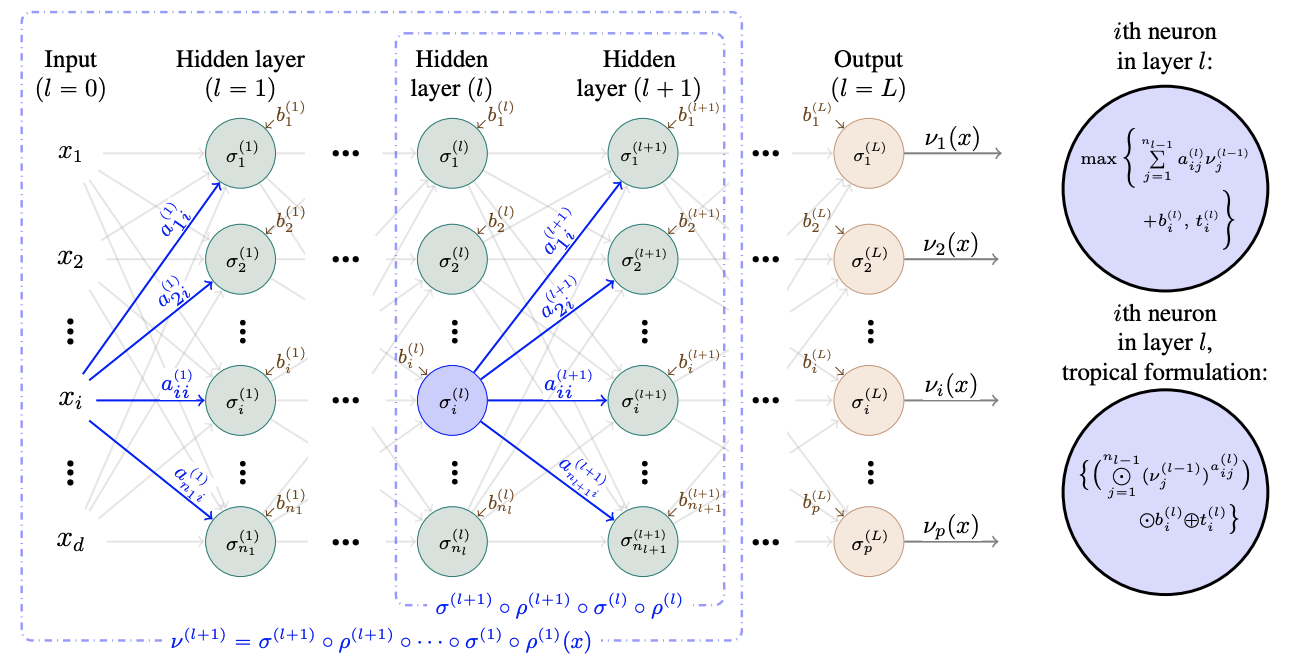}
\centering
\label{fig1}
\end{figure}

\section{Sub-Gaussian random variables and norm-sub-Gaussian random vectors}\label{subg}
Sub-Gaussian random variables and norm-sub-Gaussian random vectors are crucial concepts for the results in section \ref{conc_ineq_1_subs}. We first introduce the former \cite{jin}.
\begin{definition}
A generic random variable $X \in \mathbb{R}$ is \textit{sub-Gaussian}, written SG$(\sigma)$, if there exists $\sigma \in \mathbb{R}$ such that, for all $\theta \in \mathbb{R}$,
$$\mathbb{E} \left[ e^{ \theta(X-\mathbb{E}(X))} \right] \leq \exp \left( \frac{\theta^2 \sigma^2}{2} \right).$$
\end{definition}

The following is an important characterization of sub-Gaussian random variables.
\begin{proposition}\label{sub_g_prop1}
If $X$ is $SG(\sigma)$, then for all $t>0$
$$P\left( |X-\mathbb{E}(X)| \geq t \right) \leq 2 \exp \left( -\frac{t^2}{2\sigma^2} \right).$$
\end{proposition}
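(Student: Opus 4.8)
The plan is to derive this two-sided concentration bound from the moment generating function (MGF) definition of SG($\sigma$) via a Chernoff-type exponential Markov argument, treating the upper and lower tails separately and combining them with a union bound. This is the standard route from an MGF control to a tail bound, so I expect no genuine obstacle — the only ``work'' is an elementary one-variable optimization.

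First I would bound the upper tail. Fix $t>0$ and any $\theta>0$. Since $x\mapsto e^{\theta x}$ is increasing, the event $\{X-\mathbb{E}(X)\geq t\}$ coincides with $\{e^{\theta(X-\mathbb{E}(X))}\geq e^{\theta t}\}$, and Markov's inequality applied to the nonnegative random variable $e^{\theta(X-\mathbb{E}(X))}$ gives
\begin{equation*}
P\left(X-\mathbb{E}(X)\geq t\right)\leq e^{-\theta t}\,\mathbb{E}\left[e^{\theta(X-\mathbb{E}(X))}\right]\leq e^{-\theta t}\exp\left(\frac{\theta^2\sigma^2}{2}\right),
\end{equation*}
where the last step is exactly the SG($\sigma$) hypothesis. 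The exponent $-\theta t+\theta^2\sigma^2/2$ is minimized over $\theta>0$ at $\theta=t/\sigma^2$, yielding the value $-t^2/(2\sigma^2)$. Hence
\begin{equation*}
P\left(X-\mathbb{E}(X)\geq t\right)\leq \exp\left(-\frac{t^2}{2\sigma^2}\right).
\end{equation*}

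Next I would handle the lower tail by symmetry. The defining inequality holds for \emph{all} $\theta\in\mathbb{R}$, so evaluating it at negative arguments shows that $-X$ is again SG($\sigma$) with the same parameter; equivalently, one repeats the Chernoff argument above with $\theta<0$ applied to $\mathbb{E}(X)-X$. Either way this gives
\begin{equation*}
P\left(X-\mathbb{E}(X)\leq -t\right)\leq \exp\left(-\frac{t^2}{2\sigma^2}\right).
\end{equation*}
Finally, writing $\{|X-\mathbb{E}(X)|\geq t\}=\{X-\mathbb{E}(X)\geq t\}\cup\{X-\mathbb{E}(X)\leq -t\}$ and applying subadditivity of $P$ adds the two identical bounds, producing the factor of $2$ and completing the proof. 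If anything deserves care it is simply recording that the optimizing $\theta=t/\sigma^2$ is indeed positive (so that it lies in the admissible range of the upper-tail step), but this is immediate since $t>0$; there is no substantive difficulty beyond this routine verification.
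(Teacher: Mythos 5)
Your proof is correct and follows essentially the same route as the paper's: an exponential Markov (Chernoff) bound on each tail using the SG$(\sigma)$ moment-generating-function hypothesis, optimization at $\theta = t/\sigma^2$, and a union bound to combine the two one-sided estimates into the factor of $2$. The paper's proof is identical in substance, merely writing the optimization as minimizing $\phi(s)=\frac{s^2\sigma^2}{2}-st$ over $s>0$ and deferring the lower tail to ``repeating this process.''
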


If a random variable is bounded, then it is sub-Gaussian.
\begin{proposition}\label{bdd_sub_g}
Let $X$ be a generic random variable in $\mathbb{R}$ such that $a \leq X \leq b$ with probability $1$, for some $a,b \in \mathbb{R}$, $a<b$. Then, for all $\theta \in \mathbb{R}$,
$$\mathbb{E} \left[ e^{ \theta(X-\mathbb{E}(X))} \right] \leq \exp \left( \frac{\theta^2}{2}\frac{(b-a)^2}{4}  \right),$$
that is, $X$ is $SG\left( \frac{b-a}{2} \right)$.
\end{proposition}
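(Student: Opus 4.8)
The statement is the classical Hoeffding lemma, so the plan is to establish the moment-generating-function bound by combining the boundedness of $X$ with convexity of the exponential. First I would center the variable: set $Y := X - \mathbb{E}(X)$, so that $\mathbb{E}(Y)=0$ and $Y \in [\tilde a, \tilde b]$ with $\tilde a := a - \mathbb{E}(X)$, $\tilde b := b - \mathbb{E}(X)$, and crucially $\tilde b - \tilde a = b-a$. The target inequality $\mathbb{E}[e^{\theta(X-\mathbb{E}(X))}] \leq \exp(\theta^2(b-a)^2/8)$ becomes a statement about $\mathbb{E}[e^{\theta Y}]$, and the right-hand side is exactly $\exp(\frac{\theta^2}{2}\frac{(b-a)^2}{4})$, which matches the claimed $SG(\frac{b-a}{2})$ constant.

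Next, since $y \mapsto e^{\theta y}$ is convex, for every $y \in [\tilde a, \tilde b]$ I would write $y$ as the convex combination $y = \frac{\tilde b - y}{\tilde b - \tilde a}\,\tilde a + \frac{y - \tilde a}{\tilde b - \tilde a}\,\tilde b$ and bound $e^{\theta y} \leq \frac{\tilde b - y}{\tilde b - \tilde a}\,e^{\theta \tilde a} + \frac{y - \tilde a}{\tilde b - \tilde a}\,e^{\theta \tilde b}$. Taking expectations and using $\mathbb{E}(Y)=0$ collapses the terms linear in $y$, leaving $\mathbb{E}[e^{\theta Y}] \leq \frac{\tilde b}{\tilde b - \tilde a}\,e^{\theta \tilde a} - \frac{\tilde a}{\tilde b - \tilde a}\,e^{\theta \tilde b}$.

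Then I would reparametrize by setting $p := -\tilde a/(\tilde b - \tilde a) \in [0,1]$ and $h := \theta(\tilde b - \tilde a) = \theta(b-a)$, so that the right-hand side equals $e^{L(h)}$ with $L(h) := -p h + \log(1 - p + p e^{h})$. The crux is the analytic estimate $L(h) \leq h^2/8$ for all $h$: I would check $L(0)=0$ and $L'(0)=0$, then compute $L''(h) = \frac{p(1-p)e^h}{(1-p+pe^h)^2}$ and observe that, writing $t := pe^h/(1-p+pe^h) \in [0,1]$, this equals $t(1-t) \leq 1/4$. A second-order Taylor expansion with remainder then gives $L(h) \leq h^2/8$, and substituting $h = \theta(b-a)$ yields the desired bound.

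The main obstacle is precisely the estimate $L''(h)\leq 1/4$, since this is where the sharp constant $(b-a)^2/4$ originates; the rest is bookkeeping. An alternative that sidesteps the $L(h)$ analysis is to set $\psi(\theta) := \log\mathbb{E}[e^{\theta X}]$ and note that $\psi''(\theta)$ is the variance of $X$ under the exponentially tilted law $d\mathbb{Q}_\theta \propto e^{\theta X}\,d\mathbb{P}$; because $X \in [a,b]$ under $\mathbb{Q}_\theta$ as well, the elementary inequality $\mathrm{Var}(X) \leq \mathbb{E}[(X-\tfrac{a+b}{2})^2] \leq (\tfrac{b-a}{2})^2$ gives $\psi''(\theta) \leq (b-a)^2/4$, and Taylor's theorem applied to $\psi(\theta) - \theta\psi'(0)$ finishes the proof with the same constant. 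Either route closes the argument and identifies $\sigma = (b-a)/2$.
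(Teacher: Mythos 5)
Your proposal is correct, but your primary route differs from the paper's. The paper proves this via exponential tilting: it sets $\varphi(\theta)=\log\mathbb{E}_P(e^{\theta X})$ (after centering), identifies $\varphi''(\theta)$ as the variance of $X$ under the tilted law $\mathrm{d}Q_\theta \propto e^{\theta x}\,\mathrm{d}P$, bounds that variance by $(b-a)^2/4$ via Popoviciu's inequality, and integrates twice --- which is exactly the ``alternative'' you sketch in your last paragraph. Your main argument is instead the classical Hoeffding-lemma proof: convexity of $y\mapsto e^{\theta y}$ gives $\mathbb{E}[e^{\theta Y}]\leq e^{L(h)}$ with $L(h)=-ph+\log(1-p+pe^h)$, and the second-derivative bound $L''(h)=t(1-t)\leq 1/4$ plus Taylor's theorem yields $L(h)\leq h^2/8$. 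Both arguments are valid and give the same sharp constant $\sigma=(b-a)/2$. The convexity route is more elementary in that it avoids differentiating under the expectation and introducing the tilted measure, at the cost of the slightly fiddly analysis of $L$ (and the degenerate cases $p\in\{0,1\}$, where $Y$ is a.s.\ constant and the bound is trivial, deserve a one-line remark). The tilting route is shorter once one accepts the smoothness of the cumulant generating function, and it is the template that generalizes to other concentration arguments; the paper's use of Popoviciu's inequality is just a named version of your $\mathrm{Var}(X)\leq\bigl(\tfrac{b-a}{2}\bigr)^2$ step.
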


We now give the random vector counterpart of a sub-Gaussian random variable. 
\begin{definition}\label{def_nsg}
A generic random vector $X \in \mathbb{R}^d$ is \textit{norm-sub-Gaussian}, written nSG$(\sigma)$, if there exists $\sigma \in \mathbb{R}$ such that, for all $t \in \mathbb{R}$,
$$P \left( \| X-\mathbb{E}(X) \|_2 \geq t \right) \leq 2 \exp \left( -\frac{t^2}{2 \sigma^2} \right).$$
\end{definition}

Definition \ref{def_nsg} is closely related to the characterization in Proposition \ref{sub_g_prop1}. The following comes from \cite[Lemma 1]{jin}, and it is the random vector counterpart of Proposition \ref{bdd_sub_g}.
\begin{proposition}\label{bdd_nsg}
Let $X$ be a generic bounded random vector in $\mathbb{R}^d$ such that $\|X\|_2 \leq \sigma$. Then, $X$ is nSG$(\sigma)$.
\end{proposition}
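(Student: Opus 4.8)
The plan is to deduce the vector tail bound from the one-dimensional sub-Gaussian tools already in place, namely Proposition \ref{bdd_sub_g} (a bounded scalar is sub-Gaussian) and Proposition \ref{sub_g_prop1} (its tail estimate), together with the almost-sure boundedness of $X$. First I would reduce to $t>0$, since for $t\le 0$ the right-hand side $2\exp(-t^2/(2\sigma^2))$ already exceeds $1$ and the inequality is automatic. Next, writing $W:=X-\mathbb{E}(X)$, I would use the convexity of the Euclidean norm together with Jensen's inequality to get $\|\mathbb{E}(X)\|_2\le \mathbb{E}\|X\|_2\le \sigma$, so that $\|W\|_2\le \|X\|_2+\|\mathbb{E}(X)\|_2\le 2\sigma$ almost surely, while $W$ is centered.

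The second step is to record the directional sub-Gaussianity of $W$ and then pass to the norm. For any fixed unit vector $u$, the scalar $\langle u,X\rangle$ satisfies $|\langle u,X\rangle|\le \|X\|_2\le\sigma$, so it lies in an interval of length $2\sigma$ and Proposition \ref{bdd_sub_g} yields that $\langle u,W\rangle=\langle u,X\rangle-\mathbb{E}\langle u,X\rangle$ is $SG(\sigma)$, i.e. $\mathbb{E}[e^{\theta\langle u,W\rangle}]\le e^{\theta^2\sigma^2/2}$ for every $\theta$. I would then convert this to a bound on $\|W\|_2$ by a Chernoff/exponential-moment argument, combined with a split of the range of $t$: for $t>2\sigma$ the probability is exactly $0$ by the a.s.\ bound; for $t\le \sigma\sqrt{2\ln 2}$ the trivial estimate $P(\cdots)\le 1\le 2\exp(-t^2/(2\sigma^2))$ closes the inequality; and the intermediate band is where a genuinely quantitative estimate is required.

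The hard part will be the passage from the directional (equivalently, MGF) control of $W$ to a \emph{dimension-free} tail bound on $\|W\|_2$ carrying exactly the constant $2\exp(-t^2/(2\sigma^2))$: the directional sub-Gaussianity by itself is \emph{not} sufficient, as the isotropic Gaussian $W\sim N(0,\sigma^2 I_d)$ illustrates, since its projections are $SG(\sigma)$ yet $\|W\|_2\approx\sigma\sqrt d$. The decisive extra ingredient is therefore the almost-sure bound $\|W\|_2\le 2\sigma$, which is what suppresses the dimensional dependence; controlling its effect on the intermediate band is precisely the content of \cite[Lemma 1]{jin}, which I would invoke to finish the estimate. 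Once the tail bound holds for all $t>0$, comparison with Definition \ref{def_nsg} shows that $X$ is $\mathrm{nSG}(\sigma)$, as claimed.
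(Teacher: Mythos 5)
The paper offers no argument for Proposition \ref{bdd_nsg} at all: it is presented as a direct import of \cite[Lemma 1]{jin}, and nothing for it appears in the proofs appendix. Your proposal, read closely, ends up in the same place, so in that sense it ``matches'' the paper --- but it does not constitute a proof. Your preliminary reductions are all correct (for $t\le 0$ and more generally $t\le\sigma\sqrt{2\ln 2}$ the bound exceeds $1$; for $t>2\sigma$ the probability vanishes because $\|W\|_2\le\|X\|_2+\|\mathbb{E}(X)\|_2\le 2\sigma$ almost surely), and your observation that directional sub-Gaussianity of the projections $\langle u,W\rangle$ cannot by itself produce a dimension-free tail bound (the $N(0,\sigma^2 I_d)$ example, where $\|W\|_2\approx\sigma\sqrt{d}$) is a genuinely useful diagnosis of where the difficulty lies. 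The problem is that the one step carrying actual mathematical content --- the intermediate band $\sigma\sqrt{2\ln 2}<t\le 2\sigma$ --- is disposed of by invoking \cite[Lemma 1]{jin}, and that lemma \emph{is} the statement of Proposition \ref{bdd_nsg}. Citing the proposition to finish proving the proposition is circular; you have correctly isolated the hard part and then not proved it. A self-contained argument would have to make the almost-sure bound on $\|W\|_2$ do quantitative work on that band (for instance by controlling an exponential moment of $\|W\|_2$ or $\|W\|_2^2$ directly from its bounded range, in the spirit of the paper's own proof of Proposition \ref{bdd_sub_g}), rather than outsourcing it to the reference whose content is the claim itself.
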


\section{Martingales and filtrations}\label{mgale_filtr}
We first define very-weak, weak, and strong martingales. They were introduced in \cite[Definitions 1.2--1.4]{hayes}, and are extremely important for appendix \ref{m'gale}. We denote by $\mathbf{E}$ any real Euclidean space (of finite or infinite dimension), and by $\mathbf{1}_\mathbf{E}$ the vector $(1,1,\ldots,1)^\top$ having $\#\mathbf{E}$ entries, where $\#$ denotes the cardinality operator.

\begin{definition}\label{strong_mgale}
Let $\mathbf{X}=(X_j:\Omega \rightarrow \mathbf{E})$ be a sequence of random vectors
taking values in $\mathbf{E}$, such that $X_0=0\mathbf{1}_\mathbf{E}$, and for every $j \geq 1$, $\mathbb{E}(\|X_j\|_2)<\infty$ and $\mathbb{E}(X_j \mid X_0,\ldots,X_{j-1})=X_{j-1}$. Then we call $\mathbf{X}$ a strong martingale in $\mathbf{E}$.
\end{definition}

\begin{definition}\label{weak_mgale}
Let $\mathbf{X}=(X_j:\Omega \rightarrow \mathbf{E})$ be a sequence of random vectors
taking values in $\mathbf{E}$, such that $X_0=0\mathbf{1}_\mathbf{E}$, and for every $j<i$, $\mathbb{E}(\|X_i\|_2)<\infty$ and $\mathbb{E}(X_i \mid X_j)=X_{j}$. Then we call $\mathbf{X}$ a weak martingale in $\mathbf{E}$.
\end{definition}

If $\mathbf{X}$ is a strong martingale, then it is also a weak martingale. The converse need not hold.

\begin{definition}\label{v_weak_mgale}
Let $\mathbf{X}=(X_j:\Omega \rightarrow \mathbf{E})$ be a sequence of random vectors
taking values in $\mathbf{E}$, such that $X_0=0\mathbf{1}_\mathbf{E}$, and for every $j \geq 1$, $\mathbb{E}(\|X_j\|_2)<\infty$ and $\mathbb{E}(X_j \mid X_{j-1})=X_{j-1}$. Then we call $\mathbf{X}$ a very-weak martingale in $\mathbf{E}$.
\end{definition}

If $\mathbf{X}$ is a weak martingale, then it is also a very-weak martingale. The converse need not hold. 

\begin{example}
    A simple example for $\boldsymbol{\nu}$ in section \ref{m'gale} to be a strong martingale is the following. For all $l$, let $\nu^{(l)}(X)=\nu^{(l-1)}(X)+ N_l$, where $N_l$ is a zero-mean random vector such that $N_1,\ldots,N_L$ are independent. Then, it is immediate to see that $\boldsymbol{\nu}$ satisfies the requirements of Definition \ref{strong_mgale}. Since a strong martingale is also weak, then the $\boldsymbol{\nu}$ introduced in this simple illustration allows to circumvent the use of Theorem \ref{check_w_mgale} to verify the weak martingality of $\boldsymbol{\nu}$.
\end{example}

Now that we introduced the concept of strong martingale, it is worth to point out that in \cite[Theorem 1.2]{peskir}, the authors show that the stopped sequence $(S^\mathscr{L}_{k \wedge \tau_L^\mathscr{L}})_{L \leq k \leq \mathscr{L}}$ is a strong martingale, where $k \wedge \tau_L^\mathscr{L}:=\min\{k,  \tau_L^\mathscr{L}\}$, and $S^\mathscr{L}_k$, $k\in\{1,\ldots,\mathscr{L}\}$, was defined in section \ref{layers}.

In appendix \ref{m'gale} we implicitly assumed $\nu^{(0)}(X)=0\mathbf{1}_p$. This is just a convention, even if it differs from the one in section \ref{conc_ineq_1_subs} where we require $\nu^{(0)}(X)$ to be equal to $X$. We now state \cite[Theorem 1.8]{hayes}, which we used extensively to derive the results in appendix \ref{m'gale}.
\begin{theorem}\label{hayes}
    \cite[Theorem 1.8]{hayes} Let $\mathbf{X}=(X_j)_{j=1}^n$ be a very-weak martingale taking values in $\mathbf{E}$ such that $X_0=0\mathbf{1}_\mathbf{E}$, and, for every $j\geq 1$, $\|X_j-X_{j-1}\|_2 \leq 1$. Then, for every $a>0$,
    $$P\left(\|X_j\|_2 \geq a \right) < 2 \exp \left(1-\frac{(a-1)^2}{2j} \right).$$
\end{theorem}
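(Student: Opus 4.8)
\emph{Proof proposal.} The plan is to run the exponential (Chernoff) method, but adapted to the \emph{norm} of the vector-valued process, following the strategy behind the cited result. The backbone is a directional estimate: for any fixed unit vector $u \in \mathbf{E}$, the scalar process $S_j := \langle u, X_j\rangle$ (with $\langle\cdot,\cdot\rangle$ the inner product inducing $\|\cdot\|_2$) is itself a very-weak martingale. Indeed, $\mathbb{E}[X_j \mid X_{j-1}] = X_{j-1}$ gives, by linearity of the inner product and the tower property through the coarser $\sigma$-field generated by $S_{j-1}$, that $\mathbb{E}[S_j \mid S_{j-1}] = S_{j-1}$; moreover $|S_j - S_{j-1}| \le \|u\|_2\,\|X_j - X_{j-1}\|_2 \le 1$ and $S_0 = 0$.

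First I would establish the one-dimensional moment generating function bound. Writing $D_j := S_j - S_{j-1}$, the very-weak property yields $\mathbb{E}[D_j \mid S_{j-1}] = 0$, and since $D_j$ takes values in an interval of length at most $2$, Hoeffding's lemma gives $\mathbb{E}[e^{\lambda D_j}\mid S_{j-1}] \le e^{\lambda^2/2}$ for every $\lambda \in \mathbb{R}$. Conditioning on $S_{j-1}$ only --- which is all the very-weak hypothesis permits --- and iterating from $S_0 = 0$ then gives
\begin{equation*}
\mathbb{E}\!\left[e^{\lambda \langle u, X_j\rangle}\right] \le e^{\lambda^2 j/2}, \qquad \lambda \in \mathbb{R},
\end{equation*}
uniformly over unit vectors $u$.

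The crux --- and the step I expect to be the main obstacle --- is to upgrade this directional control to a genuinely dimension-free bound on $\|X_j\|_2$, so that no factor depending on $\dim \mathbf{E}$ survives. Naive devices fail: averaging $e^{\lambda\langle u, X_j\rangle}$ over a uniform or Gaussian direction $u$ introduces a Bessel-type or $(1-\lambda^2 j)^{-\dim/2}$ factor that blows up with the dimension. The remedy I would reproduce is to introduce a radially symmetric potential $\Phi_\lambda(x) = \varphi_\lambda(\|x\|_2)$ chosen so that (i) $\varphi_\lambda$ grows like $e^{\lambda r}$ up to a bounded, dimension-free factor, and (ii) it satisfies the one-step domination
\begin{equation*}
\mathbb{E}\!\left[\Phi_\lambda(v + D)\right] \le e^{\lambda^2/2}\,\Phi_\lambda(v)
\end{equation*}
for every $v \in \mathbf{E}$ and every mean-zero increment $D$ with $\|D\|_2 \le 1$. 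The verification of (ii) is the technical heart: because $\Phi_\lambda$ is radial, the extremal increment $D$ can be symmetrized and confined to the two-plane spanned by $v$ and the spread of $D$, reducing the inequality to a low-dimensional (hence dimension-free) convexity estimate on $\varphi_\lambda$, which one arranges by building $\varphi_\lambda$ from an appropriate hyperbolic-function comparison.

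Finally I would assemble the pieces: applying (ii) successively and using $X_0 = 0$ gives $\mathbb{E}[\Phi_\lambda(X_j)] \le e^{\lambda^2 j/2}\,\Phi_\lambda(0)$, whence Markov's inequality applied to $\Phi_\lambda$, together with the growth lower bound in (i), yields a tail bound of the form $P(\|X_j\|_2 \ge a) \le C\,e^{\lambda^2 j/2 - \lambda(a-1)}$. Optimizing over $\lambda$ by taking $\lambda = (a-1)/j$ produces the exponent $-(a-1)^2/(2j)$, and the bounded prefactor collected from (i), accounting for both tails, collapses to the stated $2\exp(1 - (a-1)^2/(2j))$. The additive shift $a-1$ and the factor $\exp(1)$ are precisely the price of the dimension-free lift in the crux step; a single fixed direction would already give the cleaner $2\exp(-a^2/(2j))$, so all the subtlety is concentrated there.
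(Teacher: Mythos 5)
The paper does not actually prove this statement: Theorem \ref{hayes} is quoted verbatim as \cite[Theorem 1.8]{hayes} and used as a black box, so there is no internal proof to compare yours against. Judged on its own terms, your proposal correctly identifies the architecture of Hayes' argument --- a Chernoff/exponential-moment scheme in which the scalar directional bound cannot give a dimension-free conclusion and must be replaced by a radially symmetric potential $\Phi_\lambda(x)=\varphi_\lambda(\|x\|_2)$ satisfying a one-step domination inequality, which is then telescoped from $X_0=0$ and combined with Markov's inequality and the choice $\lambda=(a-1)/j$.

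There is, however, a genuine gap exactly where you flag ``the technical heart'': you never construct $\varphi_\lambda$, never verify the domination inequality $\mathbb{E}[\Phi_\lambda(v+D)]\le e^{\lambda^2/2}\,\Phi_\lambda(v)$, and never justify the reduction to two dimensions. The latter is not automatic: the conditional increment $D$ is a random vector whose conditional law need not be supported on any plane through $v$, so ``symmetrizing and confining to the two-plane spanned by $v$ and the spread of $D$'' presupposes a separate lemma (which Hayes proves) that the extremal case of such inequalities is attained in $\mathbb{R}^2$. Moreover, the specific constants in the conclusion --- the prefactor $2e$ and the shift from $a$ to $a-1$ --- are precisely the output of the missing computation with the hyperbolic comparison function; as written they are asserted to ``collapse to the stated form'' rather than derived. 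The material preceding that point (the very-weak martingale property of $\langle u, X_j\rangle$ via the tower property through the coarser $\sigma$-field, Hoeffding's lemma for the conditionally centered increment bounded in $[-1,1]$, and the observation that averaging over directions incurs a dimension-dependent factor) is correct, but it serves only as motivation and is not used in the final bound. So the proposal is a faithful outline of the known proof strategy, not yet a proof.
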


We then introduce the concepts of filtration and natural filtration. They are crucial for section \ref{layers}.

\begin{definition}\label{filtration}
Let $(\Omega,\mathcal{F},P)$ be a probability space and let $I$ be an index set with a total order $\leq$. For every $i \in I$, let $\mathcal{F}_j$ be a sub-$\sigma$-algebra of $\mathcal{F}$. Then, $\mathbf{F}=(\mathcal{F}_j)_{j \in I}$ is a filtration if, for all $k \leq \ell$, $\mathcal{F}_k \subset \mathcal{F}_\ell$.
\end{definition}

\begin{definition}\label{natural_filtration}
Let $(\Omega,\mathcal{F},P)$ be a probability space and let $I$ be an index set with a total order $\leq$. Let $(S,\Sigma)$ be a measurable space and let $\mathbf{X}: I \times \Omega \rightarrow S$ be a stochastic process. Then the natural filtration of $\mathcal{F}$ with respect to $\mathbf{X}$ is defined to be the filtration $\mathbf{F}^{\mathbf{X}}=(\mathcal{F}_j^\mathbf{X})_{j \in I}$ given by
$$\mathcal{F}_j^\mathbf{X}:=\sigma \left( \left\lbrace{X_j^{-1}(A): j \in I, j \leq i, A \in \Sigma}\right\rbrace \right).$$
That is, $\mathcal{F}_j^\mathbf{X}$ is the smallest $\sigma$-algebra on $\Omega$ that contains all pre-images of $\Sigma$-measurable subsets of $S$ for ``times'' $j$ up to $i$.
\end{definition}

\section{Tropical algebra}\label{alg}
A detailed introduction to tropical algebra and tropical geometry is provided in \cite{itenberg,maclagan}. 
The fundamental element of tropical algebra, the tropical semiring, is given by
$${\mathbb{T}}=(\mathbb{R} \cup \{-\infty\},\oplus,\odot).$$
The two operations $\oplus$ and $\odot$ are called \textit{tropical addition} and \textit{tropical multiplication}, respectively, and are such that $a \oplus b:=\max\{a,b\}$ and $a \odot b:=a+b$, for all $a,b \in \mathbb{R} \cup \{-\infty\}$. The distributive law holds for tropical addition and multiplication, the identity element of tropical addition is $-\infty$, and the identity element of tropical multiplication is $0$. The tropical semiring is idempotent in the sense that $a \oplus a \oplus \cdots \oplus a=a$, for all $a \in \mathbb{R} \cup \{-\infty\}$. Because of this, there  is no tropical subtraction, but tropical division is well defined
$$a \oslash b:=a-b,$$for all $a,b \in \mathbb{R} \cup \{-\infty\}$. Tropical exponentiation is well defined as well, for all $a \in \mathbb{R} \cup \{-\infty\}$, we have that
$$a^{\odot b}:=\begin{cases}
a \cdot b &\text{if } b \in \mathbb{Z}_+\\
(-a) \cdot (-b) &\text{if } b \in \mathbb{Z}_-
\end{cases}.$$
As we can see, tropical exponentiation is well defined only for integer exponents. We also have that $$-\infty^{\odot a}:=\begin{cases}
-\infty & \text{if } a>0\\
0 & \text{if } a=0\\
\text{undefined} & \text{if } a<0
\end{cases}.$$

Notice that the tropical semiring is a (non-Diophantine) abstract prearithmetic, where the partial order is defined on the extended reals \cite{Burgin2020_2,caprio}. We can now define tropical polynomials and tropical rational functions. 

A \textit{tropical monomial} in $d$ variables $x_1,\ldots,x_d$ is an expression of the form 
$$c \odot x_1^{\odot a_1} \odot x_2^{\odot a_2} \odot \cdots \odot  x_d^{\odot a_d},$$
where $c \in \mathbb{R} \cup \{-\infty\}$ and $a_1,\ldots,a_d \in \mathbb{N}$. As a notational shorthand, we can write it in multi-index notation as $cx^\alpha$, where $\alpha=(a_1,\ldots,a_d)^\top \in \mathbb{N}^d$ and $x=(x_1,\ldots,x_d)^\top$. 

A \textit{tropical polynomial} $f(x)=f(x_1,\ldots,x_d)$ is a finite tropical sum of tropical monomials, 
$$f(x)=c_1x^{\alpha_1} \oplus \cdots \oplus c_r x^{\alpha_r},$$
where $\alpha_i=(a_{i1},\ldots,a_{id})^\top \in \mathbb{N}^d$ and $c_i \in \mathbb{R} \cup \{-\infty\}$, $i \in \{1,\ldots,r\}$. We assume that $\alpha_i \neq \alpha_j$, for all $i \neq j$.

A \textit{tropical rational function} is a standard difference, that is, a tropical quotient of two tropical polynomials $f(x)$ and $g(x)$,
$$f(x)-g(x)=f(x)\oslash g(x).$$ 
We denote a tropical rational function by $f \oslash g$, where $f$ and $g$ are tropical polynomial functions. A tropical polynomial $f$ can be seen as a tropical rational function, indeed $f=f \oslash 0$. Hence, any result holding for tropical rational functions hold also for tropical polynomials.

A $d$-variate tropical polynomial $f(x)$ defines a function $f:\mathbb{R}^d \rightarrow \mathbb{R}$ that is a convex function, in that taking max and sum of convex functions preserves convexity \cite{boyd}. So, a tropical rational function $f \oslash g:\mathbb{R}^d \rightarrow \mathbb{R}$ is a difference of convex function \cite{tao,hartman}. 

A function $F: \mathbb{R}^d \rightarrow \mathbb{R}^p$, $x=(x_1,\ldots,x_d)^\top \mapsto (f_1(x),\ldots,f_p(x))^\top$, is called a \textit{tropical polynomial map} if each $f_i:\mathbb{R}^d \rightarrow \mathbb{R}$ is a tropical polynomial, for all $i \in \{1,\ldots,p\}$, and a \textit{tropical rational map} if $f_1(x),\ldots,f_p(x)$ are tropical rational functions.

Tropical polynomials and tropical rational functions are piecewise linear functions. Hence, a tropical rational map is a piecewise linear map and the notion of a linear region applies. A \textit{linear region} of a tropical rational map $F$ is a maximal connected subset of the domain on which $F$ is linear. The number of linear regions of $F$ is denoted by $\mathcal{N}(F)$.

\section{Proofs}\label{app_a}
We first prove the results in the main portion of the paper.


\begin{proof}[Proof of Theorem \ref{approx_imp}]
Immediate from Definition \ref{def_nsg} and Proposition \ref{bdd_nsg}.
\end{proof}

\begin{proof}[Proof of Corollary \ref{cor-main}]
Immediate from Theorem \ref{approx_imp} and \eqref{sec_layer}.
\end{proof}

\begin{proof}[Proof of Proposition \ref{thresh_distr}]
Immediate from Propositions \ref{sub_g_prop1} and \ref{bdd_sub_g}.
\end{proof}

\begin{proof}[Proof of Proposition \ref{pred_fall_bd}]
Let us first assume $\mathbb{E}_{(X,Y)\sim \mathscr{D}}[s(\nu(X))] <c$. By Proposition \ref{thresh_distr}, we have that, for all $t>0$,
\begin{equation}\label{ineq_imp}
P_{(X,Y)\sim \mathscr{D}} \left( s(\nu(X)) \geq \mathbb{E}_{(X,Y)\sim \mathscr{D}}[s(\nu(X))] + t \right) \leq \exp \left( -\frac{2t^2}{(b-a)^2} \right).
\end{equation}
Then, since $\mathbb{E}_{(X,Y)\sim \mathscr{D}}[s(\nu(X))] <c$, there exists $t_2>0$ such that $\mathbb{E}_{(X,Y)\sim \mathscr{D}}[s(\nu(X))] + t_2= c$. The bound in \eqref{ineq_imp} implies that 
\begin{align*}
    P_{(X,Y)\sim \mathscr{D}} \left( s(\nu(X)) \geq \mathbb{E}_{(X,Y)\sim \mathscr{D}}[s(\nu(X))] + {t_2} \right) &= P_{(X,Y)\sim \mathscr{D}} \left( s(\nu(X)) \geq c \right)\\
    &\leq \exp \left( -\frac{2{t_2}^2}{(b-a)^2} \right).
\end{align*}
Assume now that $\mathbb{E}_{(X,Y)\sim \mathscr{D}}[s(\nu(X))] >c$. By Proposition \ref{thresh_distr}, we have that, for all $t>0$,
\begin{equation}\label{ineq_imp2}
P_{(X,Y)\sim \mathscr{D}} \left( s(\nu(X)) \leq \mathbb{E}_{(X,Y)\sim \mathscr{D}}[s(\nu(X))] - t \right) \leq \exp \left( -\frac{2t^2}{(b-a)^2} \right).
\end{equation}
Then, since $\mathbb{E}_{(X,Y)\sim \mathscr{D}}[s(\nu(X))] >c$, there exists $t_1>0$ such that $\mathbb{E}_{(X,Y)\sim \mathscr{D}}[s(\nu(X))] - t_1= c$. The bound in \eqref{ineq_imp2} implies that 
\begin{align*}
    P_{(X,Y)\sim \mathscr{D}} \left( s(\nu(X)) \leq \mathbb{E}_{(X,Y)\sim \mathscr{D}}[s(\nu(X))] - {t_1} \right) &= P_{(X,Y)\sim \mathscr{D}} \left( s(\nu(X)) \leq c \right)\\
    &\leq \exp \left( -\frac{2{t_1}^2}{(b-a)^2} \right).
\end{align*}
\end{proof}

\begin{proof}[Proof of Proposition \ref{appl-first-prop}]
If $\mathscr{S}_j$ and $\mathscr{T}_j$ are bounded for all $j \in \{1,\ldots,n_0\}$, then $F^{(0)}(X)$ and $G^{(0)}(X)$ are bounded. In addition, if $\mathscr{P}$ and $\mathscr{Q}$ are bounded, then $A^{(l)}$ and $b^{(l)}$ are bounded, for all $l\in\{1,\ldots,L\}$. In turn, this entails that $\nu^{(l)}(X)$ is bounded, that is, there exists $\xi_{(l)}\in\mathbb{R}$ such that $\|\nu^{(l)}(X)\|_2\leq \xi_{(l)}$, for all $l\in\{1,\ldots,L\}$. Then, the fact that \eqref{eq-imp-for-appl} holds comes immediately from Corollary \ref{cor-main}.
\end{proof}

\begin{proof}[Proof of Proposition \ref{lim2}]
We prove the claim only for $\mathcal{N}(f)$, as the proof for $\mathcal{N}(g)$ is analogous. Notice that $\mathcal{N}(f) \in\mathbb{N}$, so given our assumption that for some $b_1>1$, $\mathcal{N}(f) \leq b_1$ with probability $1$, we can write that $1\leq \mathcal{N}(f) \leq b_1$ with probability $1$. Then, by Proposition \ref{bdd_sub_g}, we have that $\mathcal{N}(f)$ is $SG(\frac{b_1-1}{2})$. So, by Proposition \ref{sub_g_prop1}, for all $t>0$,
\begin{align*}
    P_{(X,Y) \sim \mathscr{D}} \left( \left| \mathcal{N}(f)-\mathbb{E}_{(X,Y) \sim \mathscr{D}}[\mathcal{N}(f)] \right| \geq t \right) &\leq 2 \exp \left( -\frac{t^2}{2\frac{(b_1-1)^2}{4}} \right)\\
    &= 2\exp \left( -\frac{2t^2}{(b_1-1)^2} \right),
\end{align*}
concluding the proof.
\end{proof}

We now provide the proofs to Corollary \ref{bdd_mgale_cor} and to the results in appendix \ref{subg}.

\begin{proof}[Proof of Corollary \ref{bdd_mgale_cor}]
Immediate from Theorem \ref{thm16}.
\end{proof}

\begin{proof}[Proof of Proposition \ref{sub_g_prop1}]
Let $X$ be $SG(\sigma)$. We have that, for all $s>0$,
$$P(X-\mathbb{E}(X) \geq t) \leq P \left( e^{s(X-\mathbb{E}(X)}>e^{st} \right) \leq \frac{\mathbb{E} \left( e^{s(X-\mathbb{E}(X)}\right)}{e^{st}},$$
where the first inequality comes from using Markov's inequality, and the second one from using Chernoff's bound. Then, since $X$ is $SG(\sigma)$, we have that 
$$P(X-\mathbb{E}(X) \geq t) \leq \exp \left( \frac{s^2 \sigma^2}{2}-st \right).$$
The above inequality holds for any $s > 0$ so to make it the tightest possible, we
minimize with respect to $s > 0$. In particular, we solve $\phi^\prime(s)=0$, where $\phi(s)=\frac{s^2 \sigma^2}{2}-st$, and we find $\inf_{s>0} \phi(s)=-\frac{t^2}{2 \sigma^2}$. We complete the proof by repeating this process for $P(X-\mathbb{E}(X) \leq -t)$.
\end{proof}

\begin{proof}[Proof of Proposition \ref{bdd_sub_g}]
Without loss of generality, let $\mathbb{E}(X)=0$. 
Then, let $P$ denote the probability distribution of $X$. Pick any $\theta \in \mathbb{R}$ and define $\varphi(\theta):=\log \mathbb{E}_P(e^{\theta X})$. Let then $Q_\theta$ be the distribution of $X$ defined by
$$\text{d}Q_\theta(x):=\frac{e^{\theta x}}{\mathbb{E}_P(e^{\theta X})} \text{d}P(x).$$
We have that
$$\varphi^\prime(\theta)=\frac{\mathbb{E}_P(Xe^{\theta X})}{\mathbb{E}_P(e^{\theta X})}=\int x \frac{e^{\theta x}}{\mathbb{E}_P(e^{\theta X})} \text{d}P(x) = \mathbb{E}_{Q_\theta}(X)$$
and that 
\begin{align*}
\varphi^{\prime\prime}(\theta)&=\frac{\mathbb{E}_P(X^2e^{\theta X})}{\mathbb{E}_P(e^{\theta X})} - \frac{\mathbb{E}_P(Xe^{\theta X})^2}{\mathbb{E}_P(e^{\theta X})^2}\\
&=E_{Q_\theta}(X^2)-E_{Q_\theta}(X)^2=\mathbb{V}_{Q_\theta}(X),
\end{align*}
where $\mathbb{V}$ denotes the variance operator. Now, by Popoviciu's inequality, the following holds: $\mathbb{V}_{Q_\theta}(X) \leq \frac{(b-a)^2}{4}$. Then, by the fundamental theorem of calculus, we have that
\begin{equation}\label{almost_proof}
\varphi(\theta)=\int_0^\theta \int_0^\mu \varphi^{\prime\prime}(\rho) \text{ d}\rho\text{ d}\mu \leq \frac{(b-a)^2}{4} \frac{\theta^2}{2},
\end{equation}
using $\varphi(0)=\log 1=0$ and $\varphi^\prime(0)=\mathbb{E}_{Q_0}(X)=E_P(X)=0$. The proof is concluded by exponentiating both sides of the inequality in \eqref{almost_proof}.
\end{proof}

\end{document}